\newcommand{\applabel}[1]{\label[appendix]{#1}}
\algrenewcommand\algorithmicrequire{\textbf{Inputs:}}
\algrenewcommand\algorithmicensure{\textbf{Output:}}
\newtheorem{definition}{Definition}[section]
\newtheorem{theorem}[definition]{Theorem}
\newtheorem{lemma}[definition]{Lemma}
\newtheorem{observation}{Observation}
\newtheorem{corollary}[definition]{Corollary}
\theoremstyle{remark}
\newcommand{\jinx}{q^*}
\newcommand{\Distr}[1]{\mathsf{Distr}\!\left(#1\right)} 
\newcommand{\pp}{\mathbf{p}}
\newcommand{\avgp}[1]{\overline{p}_{#1}}     
\newcommand{\E}{\mathbb{E}}
\newcommand{\LM}{{\sf LM}} 
\DeclareMathOperator{\med}{med}
\newcommand{\eps}{\varepsilon}  
\newcommand{\defeq}{\coloneqq}
\Crefname{claim}{Claim}{Claims}
\Crefname{algorithm}{Alg.}{Algs.}
\Crefname{appendix}{Appendix}{Appendices}
\title{Consensus Sampling for Safer Generative AI}
\author{%
Adam Tauman Kalai\\
OpenAI\\
\texttt{adam@kal.ai}
\And
Yael Tauman Kalai\\
MIT\\
\texttt{tauman@mit.edu}
\AND
Or Zamir\\
Tel Aviv University\\
\texttt{orzamir@tauex.tau.ac.il}
}
\begin{document}
\maketitle

\begin{abstract}
Motivated by undetectable risks in generative AI outputs, we study a general robust aggregation problem: how to aggregate several probability distributions to boost safety. We present consensus sampling, a black-box algorithm that, given $k$ distributions, either produces a sample or abstains. It's risk is competitive with that of the safest models. This yields an architecture-agnostic approach to generative-model safety when the distributions are induced by models that can sample and evaluate output probabilities. We formalize the guarantee through $R$-robustness, which also bounds information leakage and adversarial influence. Inspired by robust statistics and the provable copyright protection algorithm of \citet{vyas2023provable}, we show that while a standard mixture is vulnerable to one unsafe constituent, a pointwise-median construction provides robust intuition, and our efficient sampler is Pareto-optimal for the tradeoff between worst-case risk and abstention. Experiments on synthetic distributions and image generation illustrate the general mechanism and its motivating safety application. The method requires overlap among safe distributions, but it provides a model-agnostic way to inherit guarantees from an unknown reliable subset.\end{abstract}

\section{Introduction}

The rapid adoption of language models ($\LM$s) and generative AI has created unprecedented opportunities alongside serious safety challenges. Misaligned systems pose escalating privacy and security risks. They may, for instance, generate code with exploitable vulnerabilities or embed harmful secrets in ordinary text or images through \textit{steganography}, both of which can be provably undetectable through inspection alone  \citep{cohen87virus,witt2023perfectly,zamir2024undetectable,motwani2024secret,perry2025robuststeganographylargelanguage}. 
Even defining ``safe outputs'' remains a formidable challenge.

Motivated by this safety problem, we study a more general robust-sampling question: given several probability distributions, how can one aggregate these distributions in a way that is resilient to unsafe or even adversarial constituent distributions? Generative models provide one important application because each model induces a distribution over outputs, but the mathematical problem is about aggregating distributions rather than any particular model architecture. Just as the median is less sensitive to an adversary changing a minority of numbers than the mean, we give an analogous distributional aggregation that is less sensitive to adversarial component distributions compared to averaging (i.e., picking a random sample from a random distribution).

Rather than needing to certify that a particular model is safe, safety depends only on some fraction of a collection being safe, without knowing which ones. Such conditions are standard in fault-tolerant systems: replicated servers and redundant memory can remain reliable when some components fail, even when we do not know in advance which components will fail. Likewise, it may be more realistic to expect that some fraction of independently developed models are safe than to certify a particular model as safe.

Many approaches to AI safety rely on human, AI, or software inspection of model activations, outputs, or reasoning processes. However, some misalignment risks provably cannot be detected or mitigated through inspection alone, even with superintelligent AI oversight assistance \citep{anwar2024foundational}. \Cref{fig:steg-examples} illustrates the issue: real image outputs may look equally innocuous even when one was generated by a steganographic model that stealthily encodes an arbitrary secret message, and long code outputs may likewise hide a security vulnerability that is impossible to spot by inspection alone.
Therefore, any protocol resilient to such unsafe behavior must interact with the models beyond simply overseeing their responses.

Our aggregation scheme needs only black-box access to sample from each distribution and evaluate the probability of any candidate output. Generative models often provide exactly this access: autoregressive language models and multimodal models that natively generate images \citep{gpt4oaddendum} can compute $p(y \mid x)$ by multiplying successive token probabilities. In the steganographic example, inspecting the final image is not enough, but the steganographic source assigns much higher \textit{probability} to its own encoded sample than the ordinary sources do. This work extends techniques from \citet{vyas2023provable}, who used $\LM$ probabilities for provable copyright protection, applying them to a fundamentally different safety challenge.

\begin{figure}[tbp]
  \centering
  \begingroup
  \setlength{\tabcolsep}{0.25em}
  \begin{tabular}{cc}
  \includegraphics[width=0.71\textwidth]{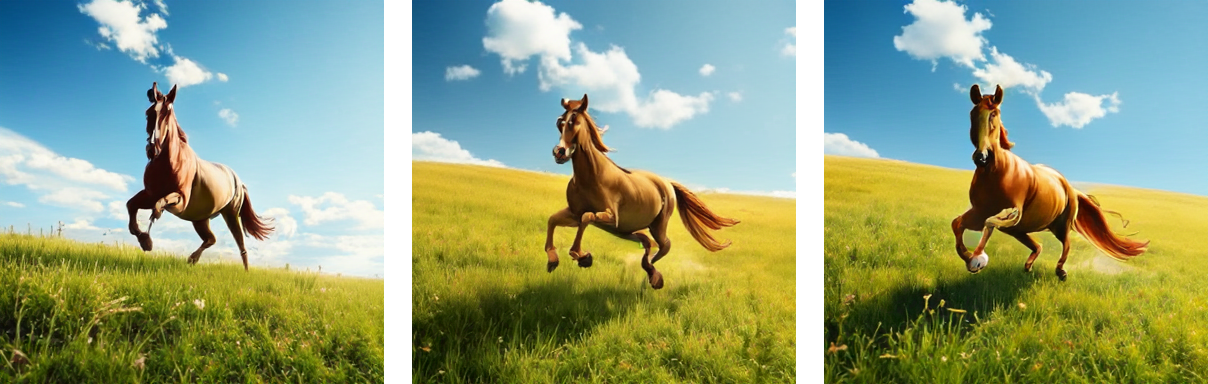}
  &
  \includegraphics[width=0.26\textwidth]{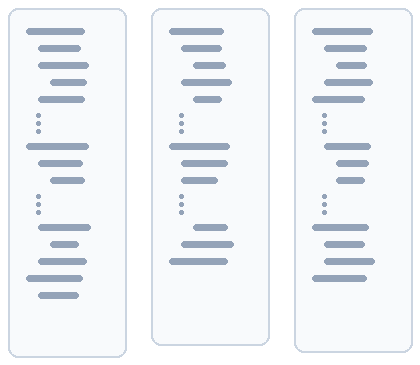}
  \end{tabular}
  \endgroup
  \caption{\textbf{Left:} three LLM outputs for the prompt ``a horse running through a meadow'', one of which secretly encodes a fictitious credit card number. From the image alone, such steganography is provably undetectable and provably doesn't alter quality \citep{witt2023perfectly,zamir2024undetectable}. However, generation probabilities reveal the suspicious output: the image on the right is much more likely according to the model that generated it than either of the other two. \textbf{Right:} three long code outputs, one of which contains a security vulnerability; such vulnerabilities can likewise be provably hard to detect from the output alone \citep{cohen87virus}.}
  \label{fig:steg-examples}
\end{figure}


Our key insight is that outputs that are generated with similarly high probabilities by multiple models are more likely to be safe, even without knowing which constituents are trustworthy. To build intuition, consider two independent samples $y_1 \sim p_1$ and $y_2 \sim p_2$, where one source may be untrusted. If it happened that $y_1 = y_2$, this surprising ``jinx'' coincidence provides natural justification for outputting $y_1$: even if one distribution is malicious, the other independently generated the same output.

Of course, for high-entropy responses, exact agreement is exceedingly rare. We thus generalize this idea using the generation probabilities. If $y_1 \sim p_1$ but $p_2(y_1) \approx  p_1(y_1)$, then outputting $y_1$ is justifiably safe: even if $p_1$ is adversarial, $p_2$ would have generated it with nearly the same probability.

Indeed, we argue that responses should be considered safe only if multiple models could have generated them. 
Rather than selecting the ``best'' or most confident response, our guarantees come from restricting to regions of agreement across the models. 
This may discard unique or seemingly superior outputs, yet such a cost is inherent, as a lone ``genius'' model could also be the malicious one. 
A cautionary example comes from cryptography: NIST standardized the pseudorandom generator Dual\_EC\_DRBG, that appeared as the most elegant and well-designed option at the time, only for it to be revealed years later that it contained an intentional NSA backdoor \citep{DualEC,Shumow2007DualEC}. 
Hence, we would allow the aggregation algorithm to \emph{abstain} from providing a response when it is unable to find such an agreement.

Our first observation is that a distribution proportional to the pointwise \emph{median} of the constituent probabilities assigned to each candidate output is resilient to a minority of unsafe constituents, provided that the trustworthy ones frequently agree on probabilities. Intuitively, even a small coalition of adversarial sources cannot substantially distort that median probability, and we formalize this intuition in subsequent sections.
To formalize this intuition, we need to: (1) Define and quantify the \emph{robustness} of such an aggregated distribution, and (2) Show we can efficiently sample from this distribution, or from a close approximation of it.

To this end, we present a simple consensus sampling algorithm that takes as input $k \ge 2$ distributions.
We provide it with the target number $1\leq s\leq k$ of safe distributions to which the guarantee is compared.
As in cryptography, it is easiest to think of $s$ of the $k$ distributions as safe and the remaining $a=k-s$ as arbitrary. They might be safe, unreliable, or even adversarial. The algorithm does not know which distributions are safe.
We also provide the algorithm with a \emph{risk tolerance} parameter $R$.
It either produces an output $y$ or \textit{abstains}; in prompted applications, the distributions are those induced by the models after receiving the prompt $x$.

We measure \emph{risk} as the probability of generating outputs from some set\footnote{Our results can trivially be extended to continuous risk functions $\E_{y \sim p}[h(y)]$ for a non-negative bounded function $h$, but we use binary unsafe sets $U$ for simplicity.} of unsafe outputs, though the set need not be known; for meaningful guarantees, the $s$ safe distributions must each assign low probability ($\ll 1/R$) to unsafe responses.

We prove that the presented algorithm's risk is at most $R$ times the average risk of the safest $s$ distributions. This holds for every fixed prompt $x$ and every unsafe set $U$, avoiding the challenge of comprehensively defining safety. This robustness guarantee is independent of any further assumptions on the distributions.
The probability of abstention, on the other hand, is affected by the parameters $R$ and $s$, and certain overlap properties of the safe distributions.
We provide a formal guarantee that if a majority of the distributions are safe then given a certain assumption on their overlap, the rate of abstention decays exponentially with~$R$.
In particular, the choice of the parameter~$R$ governs a tradeoff between risk and abstention rate, and we later show that this robustness--abstention tradeoff is Pareto optimal.

We show that the above property, which we call $R$-robustness, also implies a bound of $\log(R+1)$ bits on the information a response can leak, in the case of steganography.

The sampler returns outputs when safe distributions overlap and abstains when they do not; in generative-model applications, even safe models may have small overlap because of idiosyncratic training differences. \Cref{fig:visual} illustrates this basic distinction.

\paragraph{Misalignment risk.}
AI safety risks are often categorized as \textit{misuse} (humans causing harm with AI), \textit{misspecification/misunderstanding} (AI failing to capture human objectives), and \textit{misalignment} (AI pursuing fundamentally different goals despite ``understanding'' them). Misalignment risk 
is particularly well-suited for our approach: (a) more capable models will not naturally be better at avoiding such risk, as illustrated by the impossibility of detecting or removing steganography or malware through examination alone, (b) few existing safety techniques offer robust prevention hence the additional costs of consensus sampling may be worthwhile, and (c) a multiplicative factor increase in the risk compared to the safest models is tolerable, because safe models should extremely unlikely to pursue these misaligned goals. No human or AI, even a superintelligent overseer, can examine a program and determine whether it contains steganography or planted vulnerabilities, or remove it without altering its meaning \citep{zamir2024undetectable,perry2025robuststeganographylargelanguage,cohen87virus}. Thus, the probabilistic nature of consensus sampling may have applications to mitigating such risks.

\paragraph{Contributions.} The main contributions of this work are: (a) formulating a general robust aggregation problem for probability distributions, motivated by hard-to-detect failures in generative AI; (b) recognizing overlap as the quantity governing the robustness--abstention tradeoff; and (c) providing an efficient sampling algorithm with information-theoretic bounds that limit adversarial influence without relying on interpretability or a formal definition of safety.

\paragraph{Limitations and costs.}
We emphasize important limitations. First, the approach does not ensure safe outputs: if all input models produce unsafe outputs for a prompt, the procedure will too. It requires multiple safe models.\footnote{For $s=1$, safety bounds still hold, but an adversary can force abstention.} It does not provide methods for training safe models; rather, it explores how to aggregate existing models and must be integrated with other methodologies for training safe AI systems. The bounds apply per-prompt; repeated calls can accumulate information leakage. A crucial limitation is that even benign language models may have very little overlap, as shown in experiments by \citet{vyas2023provable}. Thus, we discuss ways to employ the technique on certain pipelines to increase overlap, but this is mostly left as an interesting line of future inquiry. The set-based notion of safe and unsafe outputs does not encompass societal harms or harms arising from multiple interactions, both of which are crucial to avoid. Most fundamentally, this work represents one narrow investigation into model-agnostic properties, not a complete safety solution. 

\paragraph{Organization.}
\Cref{sec:notation,sec:consensus,sec:robust,sec:abstain} present the formal setup and main guarantees; \Cref{sec:experiments} gives synthetic and image-generation experiments; \Cref{sec:related} discusses related work; and \Cref{sec:conclusions} concludes. The appendices contain a median-distribution proof, overlap discussion, optimality, and leakage extensions.

\section{Notation and Oracle Access Assumption}\label{sec:notation}
Fix natural numbers $s\le k$, where $s$ is the target number of safe distributions and $a\defeq k-s$ is the number of arbitrary ones. We work directly with distributions $p_i$ over a finite or countable output set $Y$; in generative-model applications, these are the distributions induced by the models for a fixed prompt. All logs are base~2. For $p\in\Distr{Y}$ and $U\subseteq Y$, write $p(U)\defeq\sum_{y\in U}p(y)$. For $\pp=(p_1,\ldots,p_k)\in\Distr{Y}^k$, let $p_{(i)}(y)$ and $p_{(i)}(U)$ denote the $i$th smallest pointwise probability and set probability, respectively:
\begin{equation}\label{eq:order}
p_{(1)}(y) \le p_{(2)}(y) \le \ldots \le p_{(k)}(y) ~\text{ and }~ p_{(1)}(U) \le p_{(2)}(U) \le \ldots\le \ldots p_{(k)}(U).
\end{equation}
Let $[k]\defeq\{1,\ldots,k\}$, and let $\bot\notin Y$ denote abstention.


\paragraph{Distribution access assumption.}
We assume oracle access to sample from each $p_i$ and evaluate $p_i(y)$ for any candidate $y$. Autoregressive language models support these operations by sampling and multiplying next-token probabilities; \Cref{sec:prompts} states the prompted version formally.

\section{Warm-up: the median distribution}\label{sec:med-main}
Before giving the efficient algorithm, it is helpful to consider the pointwise median
$$p_{\med}(y) \propto \text{median}\bigl(p_1(y), p_2(y), \ldots, p_k(y)\bigr).$$
If a majority $S$ of the distributions are safe and overlap by
$$\Delta(S)\defeq \sum_y \min_{i\in S} p_i(y),$$
then \Cref{thm:med} in the appendix shows that
$$p_{\med}(U)\le \frac{1}{\Delta(S)}\sum_{i\in S}p_i(U)$$
for every unsafe set $U$. Thus overlap converts agreement among safe distributions into a direct risk bound. The median is only a warm-up: it is not known how to sample from it efficiently, it requires overlap knowledge about the unknown safe set, and it needs a safe majority. Consensus sampling keeps the same underlying idea while providing an efficient abstaining algorithm whose robustness guarantee does not require knowing the safe set or its overlap.

\section{Consensus Sampling}\label{sec:consensus}
In the consensus sampling problem, we are given multiple distributions from which we can sample and compute probabilities, and we must produce an output or $\bot$. \Cref{alg:efficient} takes a lower bound~$s$ on the number of safe distributions and a risk-tolerance parameter~$R\geq 0$. Each round draws from the uniform mixture, sorts the $k$ probabilities assigned to the candidate, and accepts with the ratio of the average of the $s$ smallest probabilities to the average over all $k$ probabilities. The ratio is at most~1, and if no candidate is accepted after $R$ rounds the algorithm abstains.

There is a tradeoff between abstention and safety, as the algorithm which always abstains is considered safe by definition. A smaller $s$ would yield safer outputs, where one can view $s=0$ as always abstaining and $s=k$ as never abstaining and simply outputting a random sample from a random $p_i$.
Similarly, the parameter $R$ represents a risk tolerance, with smaller $R$ yielding safer outputs but higher abstention rate: $R=0$ would correspond to always abstaining and $R=\infty$ to never abstaining. \Cref{sec:prompts} analyzes the straightforward extension to include the prompt as an input. 

\begin{algorithm}[tb]
  \caption{Consensus sampling from $k$ distributions, risk-competitive with the safest $s$}
  \label{alg:efficient}
  \begin{algorithmic}[1]
    \Require Integers $k \ge s \ge 1$, $R \ge 0$, and distributions $p_1,p_2,\ldots, p_k \in \Distr{Y}$
    \Ensure A sample $y\in Y$ or $\bot$
    \For{$r = 1$ to $R$}
      \State \label{step:sample} Sample $y \sim \frac{1}{k}\sum_{i=1}^k p_i$ \Comment{pick a random index $i$ and sample from $p_i$}
      \State \label{step:acceptance}With probability $\dfrac{\frac{1}{s}\sum_{i\le s} p_{(i)}(y)}{\frac{1}{k}\sum_{i=1}^k p_i(y)}$ \textbf{return} $y$  
    \EndFor
    \State \Return $\bot$
  \end{algorithmic}
\end{algorithm}

We will analyze its efficiency, safety, and abstention rate. For efficiency, the algorithm is pseudopolynomial time in $R$:
\begin{lemma}[Efficiency]\label{lem:efficiency}
  For any constant $k$, 
  \Cref{alg:efficient} can be computed using $O(R)$ oracle calls and additional arithmetic operations. 
\end{lemma}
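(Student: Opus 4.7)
The plan is to bound the cost of a single iteration of the for loop and then multiply by $R$. The algorithm is essentially a rejection-sampling loop whose body is self-contained, so the total cost decomposes cleanly.

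First, I would verify that the sampling in Step~2 can be done with one oracle call: drawing an index $i$ uniformly from $[k]$ is free arithmetic, after which a single \texttt{sample} oracle call to $p_i$ yields $y$ distributed exactly as the mixture $\frac{1}{k}\sum_i p_i$. Next, computing the acceptance probability in Step~3 requires evaluating the probability $p_j(y)$ under each of the $k$ models: this costs $k$ \texttt{evaluate} oracle calls. Given these $k$ numbers, sorting to extract $p_{(1)}(y),\ldots,p_{(s)}(y)$ takes $O(k \log k)$ arithmetic operations, summing the $s$ smallest and all $k$ values takes $O(k)$ additions, forming the ratio is $O(1)$, and the biased coin flip is $O(1)$. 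Because the denominator $\frac{1}{k}\sum_i p_i(y)$ is strictly positive with probability $1$ whenever $y$ is drawn from the mixture (any $y$ with mixture mass $0$ is never sampled), the ratio is well-defined, and by the argument already made in the text it lies in $[0,1]$, so the coin flip is legal.

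Thus each iteration uses at most $k+1$ oracle calls and $O(k \log k)$ additional arithmetic operations. Multiplying by the at most $R$ iterations of the outer loop, and treating $k$ as a fixed constant, gives a total of $O(R)$ oracle calls and $O(R)$ additional arithmetic operations, as claimed. The terminal \textbf{return} $\bot$ contributes $O(1)$.

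There is no real obstacle here: the only subtlety worth flagging is that the sampling step is literally implemented by the ``pick a random index $i$, sample from $p_i$'' comment, which is what lets us charge one oracle call rather than $k$ to the mixture sample. Everything else is bookkeeping, so I would keep the proof to a short paragraph of the form above rather than grinding through the constants.
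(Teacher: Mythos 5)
Your proof is correct and follows essentially the same route as the paper's: one oracle call for the mixture sample via the random-index trick, $k$ probability evaluations per iteration, $O(1)$ further work since $k$ is constant, all multiplied by the at most $R$ iterations. The extra details you supply (sorting cost, well-definedness of the ratio) are fine but not needed beyond what the paper states.
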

\begin{proof}
  To draw a sample $y$ in line \ref{step:sample}, choose $i \in [k]$ uniformly at random and then draw a single sample $y \sim p_i$. To compute the probability ratio (line \ref{step:acceptance}) 
  then requires $k$ evaluations of probabilities $p_i(y)$, one for each index $i \in [k]$. A constant number of additional operations per step are required, since $k$ is taken to be a constant. 
\end{proof}

To better understand and analyze \Cref{alg:efficient}, we note that it uses rejection sampling to sample from the following distribution $\jinx$, stopping after at most $R$ iterations.
\begin{equation}\label{eq:jinx}
    \jinx(y) \defeq \frac{1}{Z} \cdot\frac{1}{s}\sum_{i=1}^s p_{(i)}(y) ~~\text{ for }~~ Z \defeq \sum_y\frac{1}{s} \sum_{i=1}^s p_{(i)}(y).    
\end{equation}
\begin{lemma}\label{lem:exact}
For any $\pp=(p_1,p_2,\ldots,p_k)\in \Distr{Y}^k,$ $R \ge 0$, and for $\jinx, Z$ from \Cref{eq:jinx}, the output distribution $q$ of \Cref{alg:efficient} satisfies:
\begin{equation}\label{eq:q}
q(\bot) = (1-Z)^R ~~\text{ and }~~q(y) = \bigl(1-q(\bot)\bigr)\, \jinx(y) \text{ for all } y \in Y.
\end{equation}
\end{lemma}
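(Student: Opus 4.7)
The plan is to analyze Algorithm 1 directly as a sequence of $R$ independent, identically distributed attempts, where each attempt either returns some $y \in Y$ or fails and moves to the next iteration. The key calculation is to compute, for a single iteration: (a) the probability that it outputs a specific $y$, and (b) the total probability of failure. These two quantities, combined with independence across iterations, will determine $q$ completely.

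First I would fix a single iteration and note that the joint probability of sampling $y$ in line \ref{step:sample} and then accepting it in line \ref{step:acceptance} is the product
$$\frac{1}{k}\sum_{i=1}^k p_i(y) \;\cdot\; \frac{\frac{1}{s}\sum_{i\le s} p_{(i)}(y)}{\frac{1}{k}\sum_{i=1}^k p_i(y)} \;=\; \frac{1}{s}\sum_{i\le s} p_{(i)}(y),$$
so that a single iteration returns $y$ with probability $Z \cdot \jinx(y)$ and fails with probability $1 - Z$. The cancellation is valid whenever the denominator is positive; any $y$ with $\sum_i p_i(y) = 0$ is sampled with probability zero and can be ignored. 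I would also briefly remark that the acceptance probability in line \ref{step:acceptance} indeed lies in $[0,1]$ because the numerator averages the $s$ smallest among the $k$ values $\{p_i(y)\}$ while the denominator averages all $k$ of them.

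Next, by independence of the $R$ iterations, the event that all iterations fail has probability $(1-Z)^R$, giving $q(\bot) = (1-Z)^R$ directly. For a fixed $y \in Y$, the event that the algorithm returns $y$ decomposes by the index $r$ of the first (and only) successful iteration, yielding a finite geometric sum
$$q(y) \;=\; \sum_{r=1}^{R} (1-Z)^{r-1}\, Z\, \jinx(y) \;=\; \bigl(1 - (1-Z)^R\bigr)\,\jinx(y) \;=\; \bigl(1 - q(\bot)\bigr)\,\jinx(y).$$

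There is no serious obstacle; the argument is essentially a direct unpacking of rejection sampling with a capped number of trials. The only subtlety is the degenerate case $Z = 0$ (which can occur when the $s$ smallest order statistics $p_{(1)},\ldots,p_{(s)}$ have no common support), where $\jinx$ is undefined but the algorithm trivially abstains. In that case $q(\bot) = 1$ and $q(y) = 0$ for every $y$, so both identities hold either under the convention $0 \cdot \jinx = 0$ or by restricting the statement to the nontrivial regime $Z > 0$. As a sanity check, the computed $q$ satisfies $q(\bot) + \sum_y q(y) = (1-Z)^R + \bigl(1 - (1-Z)^R\bigr) = 1$, confirming it is a valid distribution on $Y \cup \{\bot\}$.
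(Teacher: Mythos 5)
Your proof is correct and follows essentially the same approach as the paper: compute the per-iteration probability of emitting $y$ as $g(y)=Z\,\jinx(y)$ via the cancellation in the acceptance ratio, then use independence of the $R$ trials to get $q(\bot)=(1-Z)^R$ and the geometric sum for $q(y)$. In fact your write-up is more complete than the paper's (whose argument trails off after the $R=1$ case), and your handling of the $Z=0$ degeneracy and the check that the acceptance probability lies in $[0,1]$ are welcome additions.
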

Thus, ordinary rejection sampling (essentially $R=\infty$ if $Z>0$) gives $q(\bot)=0$  and an output distribution of $\jinx$.
\begin{proof}
The lemma trivially holds for $R=0$, because $q(\bot)=1$. For shorthand, write
  \begin{equation}\label{eq:defs}
    f(y) \defeq \frac{1}{k}\sum_{i=1}^k p_i(y),
    \qquad
    g(y) \defeq \frac{1}{s}\sum_{i=1}^s p_{(i)}(y), \qquad   \alpha(y) \defeq \frac{g(y)}{f(y)}
  \end{equation}
  so $y\sim f$ in line \ref{step:sample} and the acceptance probability in line \ref{step:acceptance} is $\alpha(y)$. 
  Fix $y\in Y$. For $R=1$, the probability of outputting $y$ is
  $$f(y)\cdot \alpha(y) = g(y)=Z \jinx(y) = (1-q(\bot))\jinx(y)$$
  and hence $q(\bot)=1-Z$, which establishes the lemma for $R=1$. For $R>1$, note that the distribution of outputs is the same each round, conditional on making it to that round. Thus, $q\propto q^*$ and the constant of proportionality is of course $1-q(\bot)$ because $q(Y)=1-q(\bot)$ while $q^*(Y)=1$.
\end{proof}

We note that the natural generalization of the median distribution to arbitrary $s$, the distribution that is pointwise proportional to $p_{(s)}(y)$, could be used in place of $\sum_{i \le s} p_{(i)}(y)$ in \Cref{alg:efficient} with the same bounds in the paper. However, we will show that $\jinx$ and $\Cref{alg:efficient}$ are optimal in terms of safety and abstention properties.

\section{Robustness properties}\label{sec:robust}

In this section we discuss robustness properties relating to a set of unsafe outputs $U \subseteq Y$, with $y \notin U$ considered safe. When $U$ is clear from context, we refer to the probability that a generation from a distribution $p$ is unsafe, $p(U)$, as the \textit{risk} of $p$. This general set-based view of safe outputs enables us to analyze \Cref{alg:efficient} without having to pick a single definition of safety, such as malicious introduction of code. We first introduce a property called $R$-robustness where the parameter $R \ge 0$ captures the excess multiplicative risk incurred by using a distribution $q$, which may abstain, compared to the ``safest'' $s$ distributions (those with smallest risk). Intuitively, we compare the risk $q(U)$ to the $s$ safest distributions. We will then prove that \Cref{alg:efficient} satisfies this definition and relate robustness to steganography.

Recall that the order statistic $p_{(s)}(U)$ denotes the $s$th smallest of the risks $p_1(U),\ldots, p_k(U)$, as defined by \Cref{eq:order}. The following definition of $R$-robustness requires being competitive with $p_{(s)}(U)$ for \textit{all possible unsafe sets} $U$. It is slightly stronger than the simpler $q(U) \le R \cdot p_{(s)}(U)$ requirement discussed in the introduction, because $p_{(s)}(U) \ge \sum_{i\le s} p_{(i)}(U)/s$:
\begin{definition}[Consensus robustness]\label{def:robust}
    For $R \ge 0$, $q\in \Distr{Y \cup \{\bot\}}$ is consensus-robust with parameter $R$, or just 
    \textit{$R$-robust}, relative to $\pp=(p_1,p_2,\ldots, p_k) \in \Distr{Y}^k$ if
    $$q(U) \le R \cdot\frac{1}{s} \sum_{i=1}^{s} p_{(i)}(U) \text{ for all } U \subseteq Y.$$
\end{definition}
Smaller $R$ indicates less risk. Because abstaining is viewed as safe, the distribution that always abstains is 0-robust. We use the same variable $R$ in this definition because our upper bound on $R$ exactly matches the input $R$ to \Cref{alg:efficient}. 

The following theorem states that the risk of \Cref{alg:efficient} is at most the average risk of the safest $s$ distributions.
\begin{theorem}[Consensus robustness]\label{thm:robustness}
For any $R \ge 0$ and $\pp=(p_1,p_2,\ldots, p_k) \in \Distr{Y}^k$,
the output distribution of Algorithm~\ref{alg:efficient} is $R$-robust relative to $\pp$. 
\end{theorem}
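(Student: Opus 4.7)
The plan is to reduce the statement to the exact characterization of $q$ from \Cref{lem:exact}, which gives $q(y) = (1-(1-Z)^R)\,\jinx(y)$ on $Y$ and hence $q(U) = (1-(1-Z)^R)\,\jinx(U)$ for any $U \subseteq Y$. Since $\jinx(U) = \frac{1}{Z}\cdot\frac{1}{s}\sum_{y \in U}\sum_{i=1}^{s} p_{(i)}(y)$, the whole theorem boils down to two inequalities: first, a combinatorial bound relating pointwise order statistics to the set-level order statistics $p_{(i)}(U)$; and second, the scalar bound $1-(1-Z)^R \le R\,Z$.

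The first step, which I expect to be the main obstacle, is the combinatorial inequality
\[
\sum_{y \in U}\sum_{i=1}^{s} p_{(i)}(y) \;\le\; \sum_{i=1}^{s} p_{(i)}(U).
\]
The trick is to use the variational characterization of partial order sums: for any non-negative numbers $a_1,\ldots,a_k$ and any $s \le k$, $\sum_{i=1}^{s} a_{(i)} = \min_{S \subseteq [k],\,|S|=s} \sum_{i \in S} a_i$. Applying this pointwise, for every $y$ and every fixed $s$-subset $S \subseteq [k]$ we have $\sum_{i=1}^{s} p_{(i)}(y) \le \sum_{i \in S} p_i(y)$. Summing over $y \in U$ yields $\sum_{y \in U}\sum_{i=1}^{s} p_{(i)}(y) \le \sum_{i \in S} p_i(U)$ for every such $S$, and minimizing the right-hand side over $S$ recovers $\sum_{i=1}^{s} p_{(i)}(U)$ by the same variational identity. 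In short: a subset $S$ that is worst-case at the aggregate level is still admissible pointwise, so the pointwise minimum is no larger than the aggregate minimum.

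With this in hand, $\jinx(U) \le \frac{1}{Z}\cdot\frac{1}{s}\sum_{i=1}^{s} p_{(i)}(U)$, so it remains to show $\frac{1-(1-Z)^R}{Z} \le R$. This is immediate from Bernoulli's inequality $(1-Z)^R \ge 1 - RZ$, valid for integer $R \ge 0$ and $Z \in [0,1]$, which applies since $Z \in [0,1]$ as a sum of averages of probabilities. Multiplying through and combining with the previous bound gives $q(U) \le R \cdot \frac{1}{s}\sum_{i=1}^{s} p_{(i)}(U)$, as required.

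The only edge case to handle is $Z = 0$: then \Cref{alg:efficient} accepts with probability $0$ and $q(\bot) = 1$, so $q(U) = 0$ and the bound holds trivially; one can either separate this case or observe that $(1-(1-Z)^R)/Z$ extends continuously to $R$ at $Z=0$. Altogether the proof is short once the order-statistic inequality is isolated, and it mirrors the structure of the warm-up argument in \Cref{thm:med}, with the variational min-over-subsets characterization playing the role that the median's min/max sandwich played there.
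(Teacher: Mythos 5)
Your proof is correct and follows essentially the same route as the paper's: \Cref{lem:exact} plus Bernoulli's inequality $(1-Z)^R \ge 1-RZ$ reduces everything to the order-statistic bound $\sum_{y\in U}\sum_{i\le s}p_{(i)}(y)\le\sum_{i\le s}p_{(i)}(U)$. Your variational argument for that last step (using $\sum_{i\le s}a_{(i)}=\min_{|S|=s}\sum_{i\in S}a_i$ pointwise and then minimizing at the set level) is in fact more careful than the paper's own justification, which asserts $\sum_{y\in U}p_{(i)}(y)\le p_{(i)}(U)$ for each individual index $i$ --- a claim that fails for $i$ near $k$ (e.g.\ the pointwise maximum) --- whereas only the sum over $i\le s$ is needed, and that is exactly what your subset argument establishes.
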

\begin{proof}
Let $j_1, j_2, \ldots, j_s \in [k]$ be distinct integers so that $p_{(i)}(U) = p_{j_i}(U)$ for each $i \le s$, i.e., the indices of the least likely $s$ distributions. For any $y \in Y$, by \Cref{lem:exact}, since $1-(1-Z)^R \le RZ$ (from the classic union bound), 
$$q(U) \le  RZ q^*(U) = \frac{R}{s}\sum_{y \in U}\sum_{i=1}^s p_{(i)}(y) \le \frac{R}{s}\sum_{y \in U}\sum_{i=1}^s p_{j_i}(y) = \frac{R}{s}\sum_{i=1}^s p_{j_i}(U).$$
By assumption on $j_i$, this last quantity is equal to $R \cdot \frac{1}{s}\sum_{i=1}^s p_{(i)}(U)$, as needed.
\end{proof}

We note that \Cref{thm:robustness} has immediate adversarial implications when there is a set $S \subseteq [k]$ of low-risk ``safe'' distributions, while the other distributions can be adversarially controlled.
\begin{corollary}[Adversarial robustness]
Let $U \subseteq Y$, $\pp \in \Distr{Y}^k$, $S \subseteq [k]$ of size $|S|=s$, with mean risk $\eps\defeq \sum_{i \in S} p_i(U)/s$. Then 
\Cref{alg:efficient}'s output distribution has risk $\le \eps R$.
\end{corollary}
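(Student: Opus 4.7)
The plan is to reduce the corollary directly to the consensus robustness theorem (\Cref{thm:robustness}) together with an elementary order-statistic inequality. Let $q$ be the output distribution of \Cref{alg:efficient}. By \Cref{thm:robustness}, $q$ is $R$-robust relative to $\pp$, meaning
\[
q(U) \le R \cdot \frac{1}{s}\sum_{i=1}^{s} p_{(i)}(U).
\]
So all that remains is to bound $\frac{1}{s}\sum_{i=1}^{s} p_{(i)}(U)$ by $\eps$.

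The key observation is that the sum $\sum_{i=1}^{s} p_{(i)}(U)$ is, by definition of the order statistics in \Cref{eq:order}, the \emph{minimum} over all size-$s$ subsets $T \subseteq [k]$ of $\sum_{i \in T} p_i(U)$. Applying this with the specific set $T = S$ yields
\[
\sum_{i=1}^{s} p_{(i)}(U) \;\le\; \sum_{i \in S} p_i(U) \;=\; s\eps,
\]
and dividing by $s$ gives $\frac{1}{s}\sum_{i=1}^{s} p_{(i)}(U) \le \eps$. Combining with the bound from \Cref{thm:robustness} gives $q(U) \le R\eps$, as required.

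There is no serious obstacle here; the only subtle point is recognizing that the safety guarantee of \Cref{thm:robustness} is phrased in terms of the $s$ smallest risks among all $k$ models, whereas the corollary's assumption $\eps = \frac{1}{s}\sum_{i\in S} p_i(U)$ refers to a particular designated subset $S$. The hypothesized subset need not coincide with the identity of the $s$ smallest risks, but this is harmless precisely because the $s$ order statistics minimize the corresponding sum, so the algorithm's bound is automatically at least as strong when evaluated against any specific subset of size $s$. This also makes it transparent that the algorithm does not need to know which models are safe: the guarantee holds against every such $S$ simultaneously.
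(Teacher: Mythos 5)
Your proof is correct and matches the paper's own argument exactly: the paper likewise invokes \Cref{thm:robustness} and then uses the single inequality $\frac{1}{s}\sum_{i\le s} p_{(i)}(U) \le \frac{1}{s}\sum_{i\in S} p_i(U)$, which holds because the sum of the $s$ smallest order statistics is minimal over all size-$s$ subsets. Your additional remarks about why the designated subset $S$ need not coincide with the $s$ smallest risks are accurate but not a departure from the paper's approach.
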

\begin{proof}
    The proof follows simply from $\frac{1}{s}\sum_{i \le s} p_{(i)}(U) \le \frac{1}{s}\sum_{i \in S} p_i(U)$.
\end{proof}

\paragraph{Backdoors and jailbreaks.}
\Cref{alg:efficient} may be more useful at thwarting backdoors than jailbreaks in current $\LM$'s. To see why, note that as long as $s$ of the models produce safe $y \notin U$ on a certain ``backdoor'' prompt $x$, then \Cref{alg:efficient} will also produce a safe output. Thus, an adversary using techniques such as \citep{kurita-etal-2020-weight,hubinger2024sleeperagentstrainingdeceptive} to plant a backdoor would have to plant the same backdoor $x$ in $>a=k-s$ of the models. For jailbreaks, though, current $\LM$s are often susceptible to ``universal'' jailbreaks \citep{wallace-etal-2019-universal} which work across many models, regardless of training platform. Therefore, for \Cref{alg:efficient} to be useful in thwarting jailbreaks would require increased model safety from further safety training innovations.

In \Cref{sec:minmax}, we show that $\jinx$ is $1/Z$-robust and in fact is optimal, being the only distribution that is $1/Z$-robust among all distributions that never abstain. This in turn, implies that our algorithm is Pareto optimal in terms of its robustness-abstention tradeoff.

\subsection{Steganography and leakage}\label{sec:leakage}

This section argues that $R$-robustness implies resilience to steganography and that 
\Cref{alg:efficient} ``leaks'' few bits. There are many possible ways to quantify leakage; we consider three, all bounded by $\log(R+1)$ bits. The first uses simulation, a common cryptographic approach. In particular, we show that any choice of the unsafe models can equivalently be simulated by a procedure whose only dependence on the unsafe models is through a single integer $1 \le t \le R+1.$ Since such an integer can be written down using $\log(R+1)$ bits (technically $\lceil \log(R+1)\rceil$), this can be viewed as  $\le \log(R+1)$ bits of leakage.
\begin{lemma}\label{lem:sim}[\Cref{alg:efficient} leakage simulation bound]
Fix any $R \ge 0$ and size-$s$ $S\subseteq [k]$. There exists a randomized function $\mathcal{A}: Y^R \times \Distr{Y}^k \rightarrow [R+1]$ such that the following procedure has the same output distribution as \Cref{alg:efficient}, for any $\pp=(p_1,\ldots,p_k) \in \Distr{Y}^k$:
\begin{algorithmic}[1]
  \State Sample $y_1,\ldots,y_R \stackrel{\text{i.i.d.}}{\sim} \frac{1}{s}\sum_{i\in S} p_i$
  \State $t \gets \mathcal{A}\big(y_1,\ldots,y_R,\,\pp\big)$
  \State \textbf{if} $t=R{+}1$ \textbf{ then} \Return $\bot$ \textbf{else} \Return $y_t$
\end{algorithmic}
\end{lemma}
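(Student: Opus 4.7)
The plan is to design $\mathcal{A}$ that implements rejection sampling from $\jinx$ with proposal distribution $\overline{p}_S \defeq \tfrac{1}{s}\sum_{i \in S} p_i$, rather than the mixture proposal $\tfrac{1}{k}\sum_i p_i$ used in \Cref{alg:efficient}. The key structural observation is that $g(y) \defeq \tfrac{1}{s}\sum_{i \le s} p_{(i)}(y) \le \overline{p}_S(y)$ for every $y \in Y$: since $p_{(1)}(y),\ldots,p_{(s)}(y)$ are the $s$ smallest values among $\{p_j(y)\}_{j\in[k]}$, for any size-$s$ subset of $[k]$---and in particular for $S$---we have $\sum_{i=1}^s p_{(i)}(y) \le \sum_{i \in S} p_i(y)$. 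Consequently $\alpha_S(y)\defeq g(y)/\overline{p}_S(y) \in [0,1]$ is a valid acceptance probability (defining it as $0$ when $\overline{p}_S(y)=0$, in which case $g(y)$ must also vanish because all $s$ values $p_i(y)$ for $i\in S$ are zero, forcing the $s$ smallest of the $k$ values to be zero).

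Given this, I would define $\mathcal{A}$ concretely: on input $(y_1,\ldots,y_R,\pp)$, draw $U_1,\ldots,U_R$ independently and uniformly on $[0,1]$, and return the smallest $j\in[R]$ with $U_j \le \alpha_S(y_j)$, or $R+1$ if no such $j$ exists. Thus the simulator sweeps through the $y_j$'s in order, accepting each with probability $\alpha_S(y_j)$, and outputs the index of the first accepted sample. Since each $y_j \sim \overline{p}_S$ independently, the per-round marginal probability of acceptance is $\sum_y \overline{p}_S(y)\,\alpha_S(y) = \sum_y g(y) = Z$, the normalizer from \Cref{eq:jinx}.

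The final step is to match the output distribution to \Cref{lem:exact}. For any $y \in Y$, by independence across rounds, the joint probability that the simulator accepts on round $j$ with output $y$ equals $(1-Z)^{j-1}\,\overline{p}_S(y)\,\alpha_S(y) = (1-Z)^{j-1} g(y)$. Summing over $j \in [R]$ gives
\begin{equation*}
\sum_{j=1}^R (1-Z)^{j-1}\,g(y) \;=\; \frac{1-(1-Z)^R}{Z}\,g(y) \;=\; \bigl(1-(1-Z)^R\bigr)\,\jinx(y),
\end{equation*}
while the abstention probability is $(1-Z)^R$. These agree with $q(y)$ and $q(\bot)$ from \Cref{lem:exact}, so the two procedures induce identical distributions over $Y\cup\{\bot\}$. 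The only nontrivial obstacle is the domination inequality $g\le \overline{p}_S$; once it is in place, the rest is a standard rejection-sampling calculation, with the degenerate cases $R=0$ (empty product of samples, $\mathcal{A}$ must return $R+1$) and $Z=0$ (so $g\equiv 0$ and every round rejects) absorbed by the same formulas.
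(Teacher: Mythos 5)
Your proposal is correct and follows essentially the same route as the paper: the simulator sweeps through the i.i.d. samples from $\tfrac{1}{s}\sum_{i\in S}p_i$, accepting each with probability $\tfrac{\sum_{i\le s}p_{(i)}(y_t)}{\sum_{i\in S}p_i(y_t)}$ (your $\alpha_S$), which is valid precisely because the $s$ smallest probabilities are dominated by the $s$ safe ones, and the per-round output law $g(y)$ then matches \Cref{lem:exact}. Your write-up is in fact somewhat more explicit than the paper's sketch in spelling out the geometric summation and the degenerate cases, but the underlying argument is identical.
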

The point is that only dependence on the unsafe $\langle p_j\rangle_{j \notin S}$ is via $t$ in the above procedure, whose output distribution is equivalent to \Cref{alg:efficient}. The $R$ independent samples are computed solely from the uniform mixture over safe distributions. Thus, an adversary's control over the output can be boiled down to a single number $t \in [R+1]$, describable in $\log(R+1)$ bits. 
\begin{proof}
    The algorithm $\mathcal{A}$ iterates through $t=1,2,\ldots, R$ and accepts $y_t$ with probability,
    $$\dfrac{\sum_{i\le s} p_{(i)}(y_t)}{\sum_{i\in S} p_i(y_t)} \le 1.$$  
    This is at most 1 because the numerator is summed over the smallest $s$ probabilities and the denominator is only over the $s$ safe ones. The per-round probability of outputting any $y \in Y$ is easily seen to be $\sum_{i\le s} p_{(i)}(y)/s$ exactly as shown for \Cref{alg:efficient} in \Cref{lem:exact}.
\end{proof}

We next argue that $R$-robustness, in general, implies resilience to steganography, as discussed briefly in \Cref{sec:med}. First we bound risk. Again, suppose there is a decoding function $f: Y \rightarrow \{0,1\}^*$ used to recover a secret message $m \in \{0,1\}^*$ encoded as, say, a binary string. Say the safe distributions have some low (presumably exponentially small) $\eps$ chance of generating a sample  $y \sim p$ that happens to decode to the secret $m=f(y)$. The following shows that $R$-robustness implies $\le \eps R$ risk due to steganography.
\begin{observation}\label{obs:steg}
Let $R \ge 0$ and $q$ be $R$-robust relative to $p_1,p_2,\ldots,p_k$. Fix $m \in \{0,1\}^*$ and a decoding function $f: Y \rightarrow \{0,1\}^*$. Let $\eps \in [0,1]$ and $S$ be a set of size $s$ such that $\Pr_{y \sim p_i}[f(y)=m] \le \eps$ for all $i \in S$. Then, 
$$\Pr_{y \sim q}[f(y)=m] \le \eps R.$$
\end{observation}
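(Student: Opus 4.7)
The plan is to recognize this statement as a direct instantiation of Definition \ref{def:robust} applied to a particular unsafe set, namely the preimage of the secret message under the decoding function. The only work is to translate the probability-of-decoding language into the set-probability language used in the robustness definition.

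First I would define $U \defeq f^{-1}(m) = \set{y \in Y : f(y) = m}$, so that for every distribution $p \in \Distr{Y}$ we have $\Pr_{y \sim p}[f(y) = m] = p(U)$. In particular, the hypothesis says $p_i(U) \le \eps$ for all $i \in S$, and the goal becomes $q(U) \le \eps R$.

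Next I would use the fact that $|S| = s$, so among the $k$ numbers $p_1(U), p_2(U), \ldots, p_k(U)$ at least $s$ of them are bounded by $\eps$. This forces each of the $s$ smallest order statistics to satisfy $p_{(i)}(U) \le \eps$ for $i = 1, 2, \ldots, s$, and hence
\[
\frac{1}{s}\sum_{i=1}^{s} p_{(i)}(U) \le \eps.
\]
Finally, applying the $R$-robustness assumption to the set $U$ gives
\[
q(U) \le R \cdot \frac{1}{s}\sum_{i=1}^{s} p_{(i)}(U) \le R\eps,
\]
which is the claim. There is no real obstacle: the only subtlety is the order-statistic bookkeeping in the third step, ensuring that the safe indices in $S$ are indeed dominated by the $s$ smallest of the full collection, which is immediate since $|S| = s$.
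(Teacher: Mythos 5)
Your proposal is correct and matches the paper's proof: both define $U=f^{-1}(m)$ and apply $R$-robustness to $U$, differing only in how the intermediate bound $\frac{1}{s}\sum_{i\le s}p_{(i)}(U)\le\eps$ is justified (you bound each order statistic by $\eps$; the paper bounds the sum of the $s$ smallest by the sum over $S$). This is an immaterial difference.
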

This holds for \Cref{alg:efficient}, since \Cref{thm:robustness} shows that its output distribution is $R$-robust.
\begin{proof}
Let $U=f^{-1}(m)$, so $p_i(U) = \Pr_{y \sim p_i}[f(y)=m] \le \eps$ for all $i \in S$. By \Cref{def:robust},
$$q(U) \le R\cdot \frac{1}{s} \sum_{i=1}^s p_{(i)}(U) \le R \cdot \frac{1}{s}\sum_{i \in S} p_i(U) \le R \cdot \eps.\qedhere$$
\end{proof}

This shows that the success rate of a steganographic attack is low, but we can go further and provide an upper bound on the number of bits leaked by our algorithm using \textit{mutual information}, commonly used to quantify average information leakage between two random variables. It requires additional notation and definitions to apply to our setting. \Cref{sec:mutinfo} proves that any $R$-robust distribution $q$ leaks at most $O(\log R)$ bits in the sense that the mutual information between a sample from $q$ and a random secret message to be leaked is $\le \log(R+1)$. It is extended to the case of prompts 
For a small image-generation illustration of this setting, see \Cref{sec:imageexp}.

\section{Abstention bounds}\label{sec:abstain}

\Cref{lem:exact} shows that the algorithm's abstention rate is exactly $q(\bot)=(1-Z)^R$ but the quantity $Z \defeq \sum_y\sum_{i\le s} p_{(i)}(y)/s$, could be minuscule or zero (always abstaining is safe but useless). We now lower-bound $Z$ even in the presence of an adversary that controls a minority $< k/2$ of the distributions. It relies on overlap, which recall is $\Delta(S)\defeq \sum_{y \in Y} \min_{i \in S} p_i(y)$.

We will be able to lower-bound the overlap with a safe majority, i.e., $s>k/2$. For instance, for $k=3$ distributions, if the $s=2$ safe distributions overlap by $\Delta(S) \ge 0.2$, then the analysis below shows that the algorithm abstains with exponentially low probability $\le 0.9^R$.
However, if the distributions rarely agree on likely outcomes, then our aggregation protocol cannot hope to robustly output a common sample with significant probability. 

\begin{figure}[tbp]
\centering
    \includegraphics[width=0.7\textwidth]{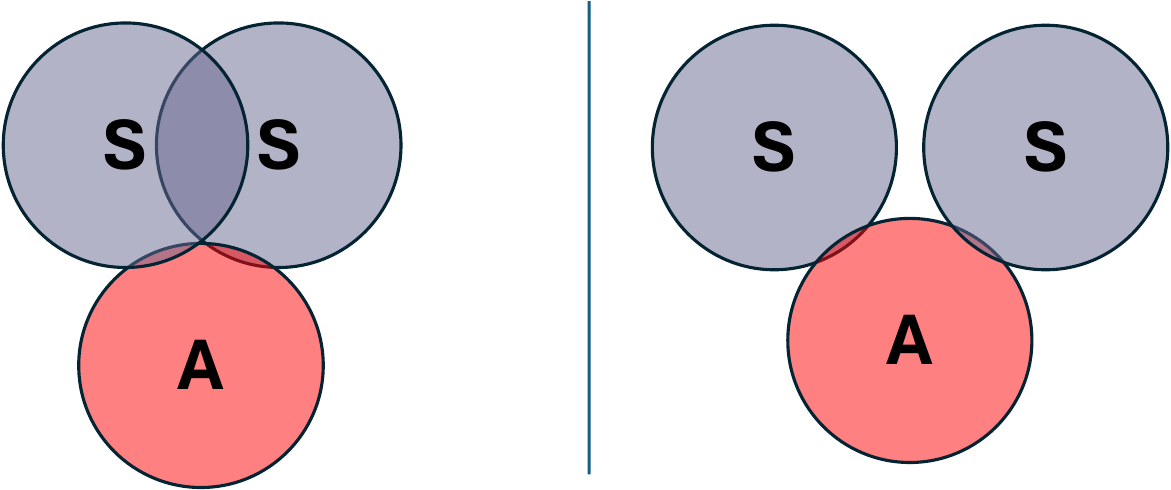}
    \caption{Suppose an adversarial model has a distribution uniform over unsafe responses, shown in red, while safe distributions are shown in silver. Left: with sufficient overlap among safe distributions, consensus sampling returns a point from the overlap region, which is mostly safe. Right: with no overlap between safe distributions, the algorithm abstains.}
    \label{fig:visual}
\end{figure}

While $Z \ge \Delta(S)/s$ follows from  
\Cref{thm:abstention} below, the theorem gives a tighter bound $Z \ge \Delta_a(S)/s$ where the \emph{maximal overlap} $\Delta_a(S) \ge \Delta(S)$, defined as follows:
\begin{definition}[$c$-overlap]\label{def:coverlap}
For an index set $I \subseteq [k]$ and $c \in \{0, 1, 2, \ldots, |I|-1\}$:
$$\Delta_c(I) \defeq \max_{\substack{J \subseteq I\\|J|=c+1}} \Delta(J).$$
\end{definition}
It is easy to see that $0 \le \Delta(I) \le \Delta_c(I) \le 1$ because $J \subseteq I \Rightarrow \Delta(J) \ge \Delta(I)$.
\begin{theorem}[Abstention]\label{thm:abstention}
Assume $s>k/2$ and fix any $R\ge 1$ and $\pp \in \Distr{Y}^k$. For the output $y$ of \Cref{alg:efficient},
\[
\Pr[y=\bot]  \le \min_{S\subseteq [k]: |S|=s}  \left(1-\frac{\Delta_a(S)}{s}\right)^R.
\]
\end{theorem}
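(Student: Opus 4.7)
The plan is to reduce the theorem to a lower bound on the rejection-sampling normalizer $Z$ from \Cref{eq:jinx}. By \Cref{lem:exact}, $\Pr[y=\bot] = (1-Z)^R$, so it suffices to prove that for every $S\subseteq [k]$ of size $s$, one has $Z \ge \Delta_a(S)/s$; the stated $\min$ then follows by taking the smallest $(1-Z)^R$ over choices of $S$.

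Fix such an $S$ and let $J\subseteq S$ attain the maximum in \Cref{def:coverlap}, so $|J|=a+1$ and $\Delta(J)=\Delta_a(S)$. The hypothesis $s>k/2$ gives $a=k-s<s$, so $a+1\le s$ and the set $J$ really does sit inside $S$. The core combinatorial step is the pointwise inequality
\[
\min_{j\in J} p_j(y) \;\le\; p_{(s)}(y) \;\le\; \sum_{i=1}^{s} p_{(i)}(y) \qquad \text{for every } y\in Y.
\]
For the first inequality, note that among the $k$ indices only $a=k-s$ lie strictly above rank $s$ in the ordering $p_{(1)}(y)\le\cdots\le p_{(k)}(y)$; since $|J|=a+1>a$, by pigeonhole at least one $j^*\in J$ has rank at most $s$, so $p_{j^*}(y)\le p_{(s)}(y)$. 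The second inequality is immediate from non-negativity of probabilities.

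Summing the displayed inequality over $y\in Y$ collapses the left-hand side into $\Delta(J)$ and the right-hand side into $sZ$, yielding
\[
\Delta_a(S) \;=\; \Delta(J) \;\le\; sZ,
\]
i.e., $Z\ge \Delta_a(S)/s$. Plugging this into $(1-Z)^R$ and taking the minimum over all size-$s$ subsets $S\subseteq[k]$ gives exactly the claimed bound.

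The only real content is the pigeonhole step and its reliance on the majority assumption $s>k/2$ (which is what makes $a+1\le s$ and therefore makes a suitable $J\subseteq S$ available); everything else is bookkeeping. I would flag in the write-up that the bound uses only one of the $s$ order-statistic terms, which is why the sharper quantity $\Delta_a(S)$, rather than merely $\Delta(S)$, appears naturally—any tighter bound would have to exploit several of the $p_{(i)}(y)$ simultaneously, and the pigeonhole argument does not obviously deliver that.
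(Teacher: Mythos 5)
Your proof is correct and follows essentially the same route as the paper's: reduce to the bound $Z \ge \Delta_a(S)/s$ via \Cref{lem:exact}, then establish the pointwise inequality $\min_{j\in J}p_j(y)\le p_{(s)}(y)$ by the same pigeonhole/counting argument (the paper phrases it as $S^\star$ containing all but $s-1$ indices of $[k]$), and sum over $y$. The role you assign to $s>k/2$ — guaranteeing $a+1\le s$ so that a suitable $J\subseteq S$ exists — matches the paper's use of that hypothesis.
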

Recall that $a=k-s$. The safe-majority condition $s>k/2$ ensures that any set of $a+1=k-s+1$ indices must contain at least one of the $s$ smallest probabilities, which is the crux of the bound.
\begin{proof}
  It suffices to show the bound for any fixed size-$s$ $S$.
  Fix $S^\star\subseteq S$ of size $|S^\star|=a+1$ that attains $\Delta(S^\star)=\Delta_a(S)$. Next, we claim that $p_{(s)}(y) \ge \min_{i\in S^\star}p_i(y)$
  for any $y \in Y$ because $p_{(s)}(y)$ is the $s$th smallest probability, and $S^\star$ has all but $s-1$ elements of $[k]$ and thus must contain one index $i$ such that $p_i(y) \le p_{(s)}(y)$. (This is where $s > k/2$ is used.)

By \Cref{lem:exact}, $\Pr[y=\bot] = (1-Z)^R$ for
$$Z=\sum_{y \in Y} \frac{1}{s}\sum_{i\le s} p_{(i)}(y) \ge \sum_{y \in Y}\frac{1}{s} p_{(s)}(y)\ge \frac{1}{s}\sum_y \min_{i\in S^\star}p_i(y)= \frac{\Delta(S^\star)}{s}= \frac{\Delta_a(S)}{s}.$$
  \end{proof}

For $a=1$, the bound is $Z \ge \Delta_1(S)/s$ where $\Delta_1(S)$ is the maximum overlap between any two safe distributions.  Unfortunately, as discussed before, this quantity may be exponentially small in the output length for language models that were not designed with overlap in mind. \Cref{sec:engoverlap} offers suggestions for ways to increase overlap and reduce abstention.
At the same time, overlap is more natural in constrained response regimes than in free-form long generation: for binary decisions, agreement and overlap coincide exactly, and tightly specified outputs such as chemical formulas or factual identifications may admit substantial overlap even when the answer space is unknown.

\section{Experiments}\label{sec:experiments}
We use two small experiments for different purposes. The first isolates the probability-aggregation mechanism from language models; the second instantiates the motivating steganography example with an open image-token model.

\subsection{Synthetic hypercube experiment}\label{sec:hypercubeexp}
To emphasize that consensus sampling is a general probability-aggregation method rather than a construction specific to language models, we ran a controlled synthetic experiment on the hypercube \(Y=\{0,1\}^{64}\). We used \(k=3\) sources and \(s=2\) safe ones. The first safe source was uniform on the hypercube, while the second sampled each bit independently with probability \(1/2+\epsilon\) of being one, with \(\epsilon=0.02\). The unsafe source was uniform except that a keyed random subset of \(m\) coordinates was fixed to an \(m\)-bit secret. The unsafe set consisted of strings whose keyed coordinates decoded to that secret. Thus the uniform safe source hit the unsafe set with probability exactly \(2^{-m}\), while the biased safe source had a nearby probability depending on the secret bits.

It may seem surprising that the unsafe probability is so large for \(m=1\), but with only one bit of steganography, a \textit{random} sample already encodes the secret with probability \(1/2\). More generally, monkeys at typewriters have probability \(2^{-m}\) of generating any given \(m\)-bit secret by chance. This makes the sweep useful pedagogically: when only one or two bits are hidden, the unsafe event is common even under the safe sources, so consensus sampling has little reason to suppress it; as \(m\) grows and the safe-source probability of the secret becomes small, unsafe outputs become an increasingly distinctive signature of the adversarial source.

We swept the hidden message length over \(m\in\{1,2,4,8,16\}\) and the risk tolerance over \(R\in\{1,\ldots,5\}\), using 20,000 Monte Carlo trials per setting. For each plotted rate in \Cref{fig:hypercube-steg-sweep}, we treat the trials as Bernoulli observations and report the two-sided 95\% Wilson confidence interval for the underlying probability. The resulting curves show the robustness--abstention tradeoff predicted by the theory: longer hidden messages are rarer under the safe sources, so consensus sampling suppresses them more strongly, while larger \(R\) lowers abstention. This synthetic experiment ran on a laptop CPU in under one minute.

\begin{figure}[tbp]
\centering
\includegraphics[width=0.84\textwidth]{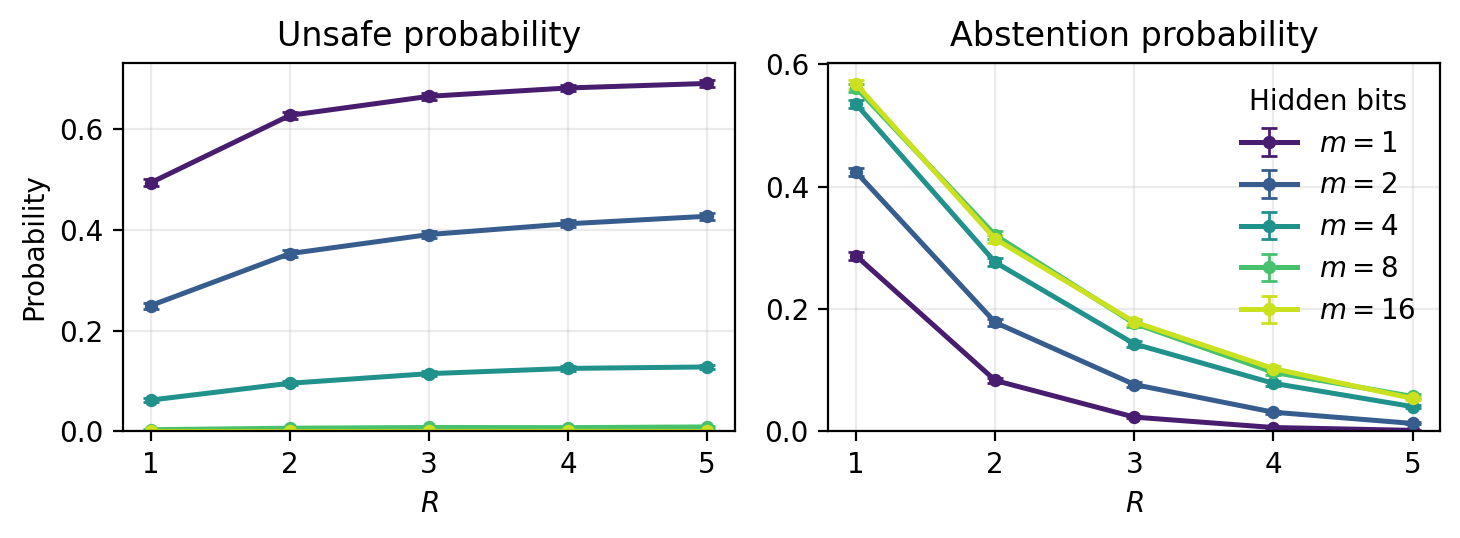}
\caption{Synthetic hypercube experiment with one uniform safe source, one slightly biased safe source, and one keyed steganographic source. Left: unsafe-output probability under consensus sampling. Right: abstention probability. Error bars, barely visible at this scale, show two-sided 95\% Wilson confidence intervals over 20,000 trials per setting.}
\label{fig:hypercube-steg-sweep}
\end{figure}

\subsection{Image steganography experiment}\label{sec:imageexp}
We next return to the steganography application of \Cref{fig:steg-examples} with Janus-Pro-7B image tokens and the public scheme of \citet{zamir2024undetectable}. In the idealized random-function view of the construction, averaging over the secret randomness leaves the base output distribution exactly unchanged; with a pseudorandom key, the formal guarantee is computational indistinguishability. We used Janus-Pro-7B to generate \(384\times384\) images directly from 30 short image prompts such as ``a horse running through a meadow'' or ``a lighthouse during a storm.'' The 30 prompts were drafted with assistance from OpenAI Codex and then manually selected and edited. For each prompt, we sampled a fictitious 20-digit credit card number that was not shown to the image model and served only as the hidden message.

The two ordinary sources, \(p_1\) and \(p_2\), were independent ordinary samples from the Janus distribution conditioned on the image prompt. The third source, \(p_3\), used the public implementation accompanying \citet{zamir2024undetectable}, adapted from text tokens to Janus image tokens, to encode the binary representation of the hidden credit card number while generating from the same image prompt. We sampled five images from each source for each prompt. The steganographic decoder recovered the full 67-bit number from all 150 steganographic images.

For this deliberately simple illustration, the adversarial source was keyed and deterministic: for a fixed prompt, sample index, and key, it assigned probability one to its own steganographic trajectory and zero to off-path trajectories. Thus, the experiment is intended as a sanity check that the consensus-sampling mechanism behaves as predicted on real image-token distributions, rather than as a benchmark for optimized steganographic attacks. Under the ordinary mean mixture, a uniformly drawn source is unsafe with probability \(1/3\) by construction and never abstains. We then ran \Cref{alg:efficient} with \(k=3\) and \(s=2\) on 30 prompt instances. As shown in \Cref{tab:image-steg-results}, consensus sampling never returned a steganographic image in this run, while its abstention rate fell to zero by \(R=4\). Representative samples appear in \Cref{fig:image-steg-examples}; the unsafe source is not visually apparent from the generated image alone.

\begin{table}[tbp]
\centering
\small
\begin{tabular}{lcc}
\toprule
Method & Abstention rate & Unsafe rate \\
\midrule
Uniform mean mixture & \(0.00~[0.00, 0.12]\) & \(0.33~[0.17, 0.53]\) \\
Consensus sampling, \(R=1\) & \(0.37~[0.20, 0.56]\) & \(0.00~[0.00, 0.12]\) \\
Consensus sampling, \(R=2\) & \(0.23~[0.10, 0.42]\) & \(0.00~[0.00, 0.12]\) \\
Consensus sampling, \(R=3\) & \(0.10~[0.02, 0.27]\) & \(0.00~[0.00, 0.12]\) \\
Consensus sampling, \(R=4\) & \(0.00~[0.00, 0.12]\) & \(0.00~[0.00, 0.12]\) \\
\bottomrule
\end{tabular}
\caption{Image steganography sanity check. Brackets show two-sided 95\% exact binomial confidence intervals over 30 prompt instances. The unsafe point rate for the mean mixture is \(1/3\) by construction because exactly one of the three sources is steganographic; for comparability, its interval is the exact-binomial interval corresponding to 10 unsafe outcomes among 30 trials.}
\label{tab:image-steg-results}
\end{table}

\begin{figure}[tbp]
\centering
\includegraphics[width=0.78\textwidth]{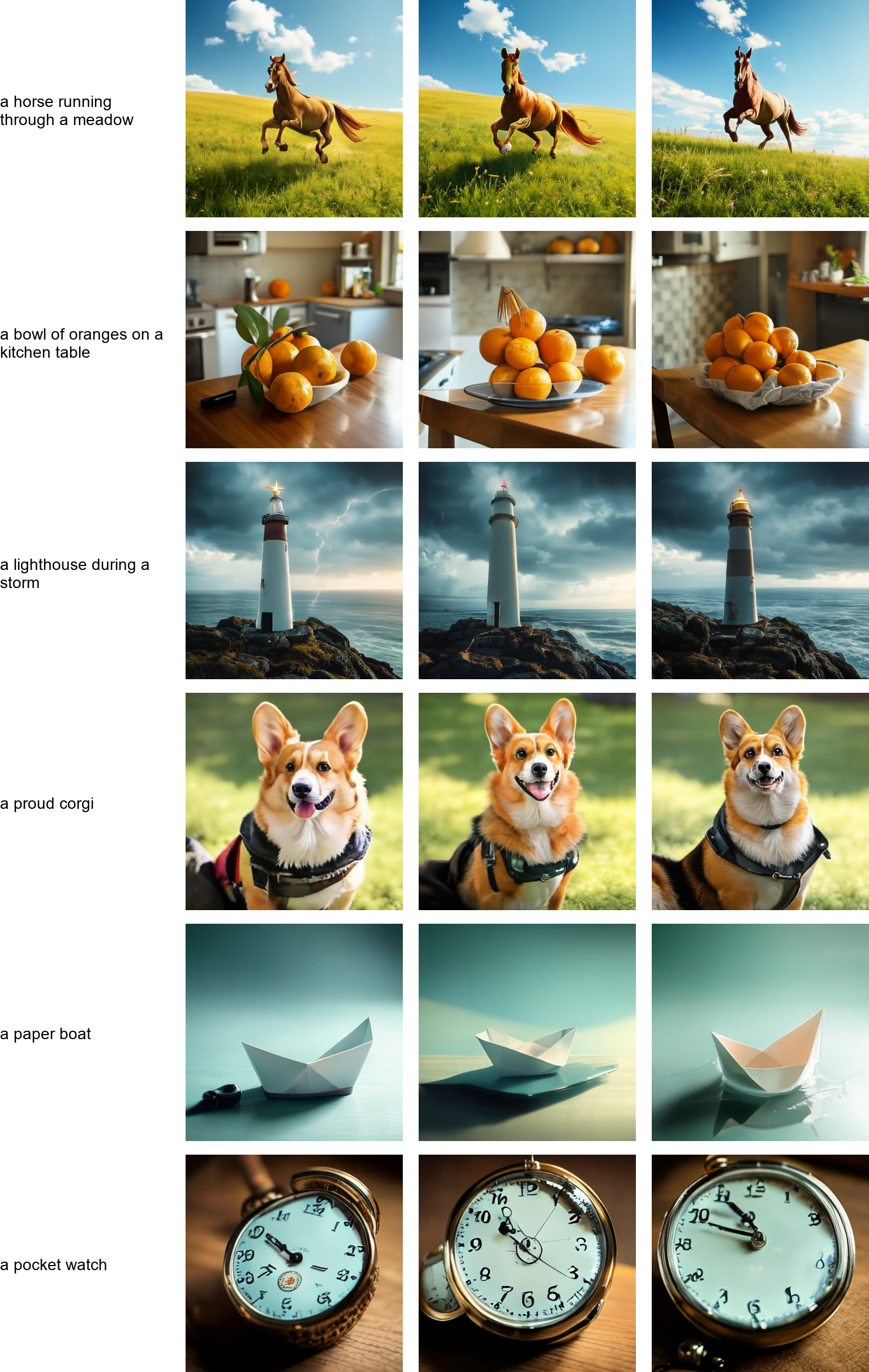}
\caption{Representative image outputs from six prompts. Each row contains two ordinary images and one steganographic image in shuffled order; the consensus sampler is not told which source is unsafe. The steganographic images are visually ordinary despite encoding the hidden number.}
\label{fig:image-steg-examples}
\end{figure}

The Janus code repository is MIT-licensed, the Janus-Pro-7B weights are released under the DeepSeek Model License, and the steganography implementation used in our supplementary code is released with permission from its author. The reported run used four NVIDIA GB300 GPUs for about 10 minutes wall-clock, about 35 GPU-minutes in total.

\section{Related Work}\label{sec:related}
This paper is related to prior work in information-theoretic safety, ensemble methods, hardness results for detection problems, and broader AI safety approaches.

\paragraph{Information-theoretic foundations.} 
Our approach is inspired by the elegant work of \citet{vyas2023provable} which uses ensemble methods for provable copyright protection by training separate generative models on disjoint data partitions and sampling only from regions of agreement. Their CP-$\Delta$ algorithm is analogous to the special case $s=1$ in our \Cref{alg:efficient}, and our consensus-robustness notion parallels their $k$-Near Access-Free definition with Renyi divergence of order infinity. However, their focus is the non-adversarial problem of copyright protection through training data partitioning, while we address a broader class of safety risks, including misalignment risk such as steganography and code vulnerabilities. Differential privacy \citep{dwork2006calibrating} provides another information-theoretic framework for safety, establishing bounded dependence on training data for privacy-preserving machine learning. It has been incorporated into generative models \citep{Li2022LLMsDP}. Differential privacy is similar in spirit but technically different because, in our setting, a single adversarial model can drastically shift the distribution. 

\paragraph{Ensemble methods and probability aggregation.}
The use of multiple models has a rich history in machine learning. Classical ensemble methods combine predictions to boost accuracy and reduce variance \citep{dietterich2000ensemble}. In the context of language models, several recent works leverage ensembles specifically for safety and reliability objectives. Multiple prior works combine language model probabilities to improve safety and reliability, though with different goals and guarantees than our approach. DExperts \citep{Liu2021} uses a combination-of-experts at decoding time to reduce toxic content, while self-consistency methods \citep{wang2022selfconsistency} sample multiple reasoning paths and select the most consistent answer to reduce logical errors. 
These methods illustrate the power of using multiple models or prompts in tandem to cross-check and filter generations for safety and fidelity. However, this prior work does not address risks such as steganography or code vulnerabilities, which are difficult to detect and may be particularly concerning for superintelligent AI.

\paragraph{Fundamental hardness results.}
One motivation for our work comes from computational impossibility results that limit inspection-based safety approaches. \citet{cohen87virus} showed that detecting security vulnerabilities such as viruses is computationally intractable. More recently, \citet{zamir2024undetectable,witt2023perfectly} proved that steganography detection is similarly intractable, demonstrating an efficient language model scheme for encoding arbitrary messages indistinguishably from ordinary outputs. \citet{perry2025robuststeganographylargelanguage,motwani2024secret}  show that hidden messages can be encoded such that removing them necessarily alters semantic meaning, even with unlimited computational resources.

These hardness results have profound implications: no human or AI system, regardless of capability, can reliably detect or remove such embedded information through output inspection alone. This suggests that inspection-based safety approaches may have certain limitations and motivates using output probabilities. In practice, while methods like paraphrasing \citep{Krishna2023Paraphrasing} can often destroy hidden messages, advanced steganographic techniques have emerged which survive such interventions \citep{mathew2024hiddenplaintextemergence}. Our approach circumvents these hardness results by aggregating probabilities rather than inspecting outputs.

\paragraph{Broader AI safety landscape.}
Our work complements but differs fundamentally from mainstream AI safety approaches \citep[see, e.g.,][]{anwar2024foundational}. Supervised oversight methods \citep{bowman2022measuring} typically involve one model evaluating another's outputs, but the hardness results above suggest this may be insufficient for provably undetectable risks. AI debate frameworks \citep{irving2018ai} decompose complex claims into verifiable atomic steps, but similarly it is not clear how a claim like ``this image contains no secret messages'' would fit into the debate framework. Mechanistic interpretability \citep{Bereska2024MechInterpReview} aims to understand model internals but generally lacks formal guarantees that the interpretation is correct. Nonetheless, we emphasize that our method provides no training methodology for safe models. It must therefore be integrated with other safety training approaches rather than replacing them.

\paragraph{Fault-tolerant computing and cryptographic perspectives.}
Our framework draws inspiration from fault-tolerant distributed systems, where consensus protocols achieve robustness despite adversarial participants. The Byzantine generals problem \citep{Lamport1982} establishes that systems can tolerate up to one-third adversarial participants. However, our consensus sampling problem is more forgiving: we need not achieve agreement on the models' distributions themselves, but merely draw samples likely to be safe given distributional overlap. Recent work applies cryptographic insights to AI safety \citep{Goldwasser2024Backdoors,ball2025impossibilityseparatingintelligencejudgment,zamir2024undetectable}, though many such results are negative impossibility findings. Our contribution differs by providing a constructive algorithm with positive information-theoretic guarantees.

\section{Conclusions, Limitations, and Future Work}\label{sec:conclusions}

We introduce a black-box framework for aggregating probability distributions that is provably robust to a specified number of arbitrary constituents. Its risk is controlled by the safest \(s\) input distributions, while \(R\)-robustness also bounds information leakage and can be realized efficiently with bounded abstention.
More broadly, these results open the door to a theoretical treatment of resilience in model outputs: not as a question of interpretability or semantics, but as a property of aggregating a collection of distributions while limiting the influence of unsafe constituent models. This perspective yields formal guarantees that complement empirical oversight methods.

\paragraph{Limitations.} Our work does not provide an end-to-end safety guarantee for several reasons.
First, assuming that at least \(s\) of the \(k\) distributions are truly safe is neither trivial nor inherently justified.
Second, repeated sampling from our algorithm may lead to an accumulation of leakage or unsafe responses.
A third central limitation is our reliance on multiple models and overlap: guaranteeing both safety and non-abstention requires a majority of safe models that overlap. Finally, many risks, such as societal harms measured by distributional disparities \citep{KearnsRoth2019EthicalAlgorithm}, cannot be modeled as a set of unsafe outputs. These guarantees aim to reduce undetectable harms, but techniques that make highly capable systems safer may also make their deployment more likely, with broader costs such as labor displacement that our framework does not address.

\paragraph{Future directions.} Consensus sampling raises questions such as how to extend it to multiple (possibly agentic) interactions.
Although generative AI motivates us, the core problem is more general: consensus sampling aggregates probability distributions given only sampling and probability-evaluation access. Due to this black-box statistical nature, it can apply to future models that may operate in ways quite different from today's models, as long as they exhibit sufficient overlap. A promising future direction is to \emph{engineer} greater overlap (see \Cref{sec:engoverlap}). The copyright-protection literature offers a useful precedent: the stringent overlap-based framework of \citet{vyas2023provable} was followed by more practical adaptive fusion work such as CP-Fuse \citep{abad2025cpfuse}. We hope consensus sampling plays a similar role for safety by isolating the aggregation primitive and the barrier future work should address. In some sense, this quest for overlap is one of identifying a \textit{canonical distribution} which the models can be steered toward.

\begin{ack}
We are grateful to Anastasiia Struss for correcting a mistake in the proof of \Cref{thm:robustness}. We also thank Cary Bassin, Mia Glase, and Min Jae Song for helpful comments. 
Or Zamir's research is supported by an AISI Alignment Project Grant.
\end{ack}


\bibliographystyle{plainnat}
\bibliography{main}

\appendix

\section{Median-distribution warm-up}\applabel{sec:med}
The pointwise median gives a simple intuition for why agreement can suppress unsafe outputs. Define
$$p_{\med}(y) \propto \text{median}\bigl(p_1(y), p_2(y), \ldots, p_k(y)\bigr).$$
It is not clear how to efficiently sample from this distribution. However, if a majority of the distributions are safe, its risk can be bounded in terms of their overlap. For a set $S\subseteq[k]$ of safe distributions, define
$$\Delta(S)\defeq \sum_y \min_{i \in S} p_i(y) \in [0,1].$$
For identical distributions, $\Delta(S)=1$. For $|S|{=}2$, it is the Weizmann overlap, equal to one minus total variation distance; the generalization to arbitrary size is standard \citep[see, e.g.,][]{eidous2025generalization}.

\begin{theorem}[Median distribution safety]\label{thm:med}
    Let $S\subseteq [k]$ be of size $|S|=s>k/2$ with $\Delta(S)>0$.
    For any set of unsafe outputs $U\subseteq Y$,
    $$p_{\med}(U) \le \frac{1}{\Delta(S)}\sum_{i \in S} p_i(U).$$
\end{theorem}
If each safe distribution has $p_i(U)\le\eps$, then $p_{\med}$ has risk at most $\eps s/\Delta(S)$; thus the bound is useful when safe overlap is large relative to safe risk. The theorem also applies to steganography by taking $U=f^{-1}(m)$ for a decoder $f$ and target message $m$.
%
\begin{proof}
Let $d(y)\defeq \med\{p_1(y),\ldots,p_k(y)\}$ and $D\defeq\sum_y d(y)$, so $p_{\med}(y)=d(y)/D$. Since $|S|>k/2$,
\begin{equation}\label{eq:yay}
\min_{i\in S} p_i(y) \le d(y) \le \max_{i \in S} p_i(y).
\end{equation}
Using $\max_{i\in S}p_i(y)\le \sum_{i\in S}p_i(y)$,
\begin{equation}\label{eq:num}
p_{\med}(U) \le \frac{1}{D}\sum_{y\in U}\sum_{i\in S}p_i(y)
= \frac{1}{D}\sum_{i\in S}p_i(U).
\end{equation}
Summing the left side of \Cref{eq:yay} over $y$ gives $\Delta(S)\le D$, which proves the claim and also shows that $p_{\med}$ is well-defined.
\end{proof}
The median distribution motivates the efficient algorithm but has three shortcomings: it is not known how to sample from it efficiently, it requires overlap knowledge about the unknown safe set, and it needs a safe majority. Consensus sampling replaces it with an efficient mechanism whose robustness guarantee requires neither overlap knowledge nor a safe majority, abstaining when needed.


\section{Designing for overlap}\applabel{sec:engoverlap}
We have said little about how $k$ distributions can be trained to encourage both safety and overlap. Intuitively, models trained differently--by different groups or with different methodologies--are more likely to fail independently; such diversity reduces the chance of shared points of failure.

However, independently trained models may exhibit little overlap, especially for long generations---as found in experiments by \citet{vyas2023provable} even when using identical training methodologies. At first it may seem intuitive that one would have exponentially small overlap for long generations, e.g., even if two $\LM$s differ on which type of punctuation they use more often, these differences can accumulate over a long generation. However, a good $\LM$ may have adaptive generation in the sense that given that the first half of a generation uses a certain punctuation convention, it may be likely to continue that convention regardless of its likelihood of using it in the first place. 

We view low overlap in current models as the next stage of the research program rather than a reason to abandon overlap-based methods. The copyright-protection literature provides a useful precedent: \citet{vyas2023provable} introduced a stringent distributional framework, and later work developed more practical adaptive fusion methods within that line \citep{abad2025cpfuse}. Analogously, our contribution is to isolate a probability-aggregation primitive and its guarantees; the next question is how to train, prompt, or compose model families that inhabit a useful high-overlap regime.

Overlap is also more natural in constrained response regimes than in free-form long generation: for binary decisions, agreement and overlap coincide exactly, and tightly specified outputs such as chemical formulas or factual identifications may admit substantial overlap even when the answer space is not known in advance. There are a variety of ways to achieve varied models that still have overlap. For an $\LM$, for instance, one could vary its system instructions to try and thwart various types of harmful generations. A more general methodology, however, is to train or compose models that aim for some \textit{canonical} output distribution for each prompt. This may leverage a small not-so-capable model, $p_\text{small}$, which we believe is unlikely to inject hard-to-detect harmful outputs such as steganography.
\begin{itemize}
    \item Use different large models to generate short ``hints'' which can then be provided to $p_\text{small}$. 
    \item In addition to generating hints, use the large models as ``gatekeepers'' at the end of the generation to determine whether $p_\text{small}$'s output is valid.
    \item Alternatively, the canonical distribution could be $p_\text{small}(\cdot \mid y \in V)$ conditioned on the output $y$ belonging to some set $V$ of good responses. For example, if the prompt is to prove a mathematical theorem, the set $V$ may be the set of valid proofs, which $p_\text{small}$ would be unlikely to generate on its own. This is a bit similar in spirit to maximum-entropy $\LM$s \citep{Rosenfeld1996MaxEntLM} but where the constraints are based on a complex validity criterion.
    \item Finally, $\LM$s could be trained with an objective of staying close to $p_\text{small}$ but with other validity objectives.
\end{itemize}
The requirements for a generative AI oracle to be used with our algorithms is: (a) the ability to sample outputs, conditioned on a prompt; and (b) the ability to compute the conditional probability of any generation $y\in Y$ given a prompt. However, one can extend the framework to allow the sampling oracle to fail and output $\bot$ as well, in which case the algorithm will necessarily resample. This does not impact the safety or abstention guarantees as long as overlap is only computed over $y \in Y$ (and not $\bot$). Also, note that the oracle does not need to be able to compute its abstention probability $p(\bot)$. 

We note that identifying canonical distributions and using them for safety is an interesting future direction of work.

Finally, one could of course run the algorithm with a significantly larger number of iterations $R$, say scaling up by a factor of $2^L$, under the logic that the worst-case bounds in this paper are pessimistic, and in practice an adversary who does not have access to the other models cannot leak many bits. This is similar to how differential privacy \citep{dwork2006calibrating} is often found to be privacy preserving with a much larger tolerance $\eps$ than justified by the worst-case theoretical bounds. Analogous to what \citet{vyas2023provable} do in the context of copyright protection, one can  augment \Cref{alg:efficient} with additional ``slack'' parameter $L\ge 0$ and changing the acceptance probability in line \ref{step:acceptance} to:
        $$\min\left\{1,2^L\frac{\frac{1}{s}\sum_{i\le s} p_{(i)}(y)}{\frac{1}{k}\sum_{i=1}^k p_i(y)}\right\}.$$
This variation is $R'=2^L R$-robust and thus leaks an additional $L$ bits, but abstains exponentially less often. The justification for such a large robustness parameter is that, if overlap is extremely challenging to achieve via canonical distributions, then an adversary who does not have knowledge of the safe distributions may inherently have low overlap. This would be substantially faster than just running the algorithm for $R'$ iterations.

\section{Optimality}\applabel{sec:minmax}
In this section, we show that \Cref{alg:efficient} is optimal in terms of robustness. Recall that we have fixed $s \le k$. However, the algorithm still has a parameter $R=0,1,2,\ldots$ and the abstention rate decreases in $R$ while the robustness increases in $R$. Specifically, we prove that for each parameter setting $R$ the algorithm trades off abstention rate for worst-case robustness in a Pareto optimal fashion. 

The high-level structure of the argument is quite simple. Recall from \Cref{lem:exact} that, for any $R$, the output distribution of the algorithm $q$ is either $\bot$ with probability $q(\bot)=(1-Z)^R$ and $y$ with probability $q(y) = (1-q(\bot))\jinx(y)$. Thus, it is simply a downscaled version of $\jinx$ from \Cref{eq:jinx}. 

Moreover, we show a linear relationship between (non)abstention and robustness, so one can simply trade off abstention for robustness. Finally, we argue that the distribution $\jinx$ is the optimal distribution in terms of worst-case robustness over distributions that do not abstain. This means that each input $R=0,1,2,\ldots$ (and $R=\infty$ corresponding to never abstaining) is Pareto optimal in terms of their abstention and worst-case robustness rates.

In this section, for optimality, we define the smallest value of $R \ge 0$ such that $q$ is $R$-robust with respect to $\pp$, and also the worst-case value of this over all $S$. To do so, it is helpful to define the average distribution $\avgp{S} \in \Distr{Y}$ with respect to any nonempty set of indices $S\subseteq[k]$:
\begin{equation}\label{eq:avgp}
  \avgp{S}(y)\defeq \frac{1}{|S|}\sum_{i \in S}p_i(y) \text{ for all } y \in Y  
\end{equation}
\begin{definition}\label{def:robust2}
    For distribution $q$ over $Y \cup \{\bot\}$ and distributions $p_1,p_2,\ldots, p_k$, over $Y$, 
    $$R_q \defeq \max_{y \in Y}\max_{S \subseteq [k]: |S|=s} \frac{q(y)}{\avgp{S}(y)},$$
    with $0/0$ taken to be 0 and $R_{q}=+\infty$ if $q(y)>0=\avgp{S}(y)$ for some $y\in Y$ and size-$s$ $S$.
\end{definition}
It is easy to see that $q$ is $R'$-robust relative to $p_1,\ldots,p_k$ iff $R'\ge R_q$. $R_q$ is thus the \textit{worst-case} robustness of distribution $q$ with respect to any safe subset. This definition suggests a notion of optimality, for any given $p_1,p_2,\ldots, p_k$, meaning that the algorithm should be optimally robust against the worst-case subset of safe distributions.

First, we show that $\jinx$ is optimal among distributions that never abstain, $\Distr{Y}$.
\begin{lemma}[Minmax robustness]\label{lem:jinx}
For any $\pp=(p_1,p_2,\ldots, p_k) \in \Distr{Y}^k$, 
$$
\min_{q \in \Distr{Y}} R_q = \frac{1}{Z} \text{ for } Z \defeq \sum_y\frac{1}{s} \sum_{i=1}^s p_{(i)}(y),
$$
with $\jinx$ from \Cref{eq:jinx} being the unique minimizer if $Z\ne 0$.
\end{lemma}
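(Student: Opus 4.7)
The plan is to first simplify the expression for $R_q$ by identifying the worst-case safe subset $S$ pointwise. For any fixed $y \in Y$, among all size-$s$ subsets $S \subseteq [k]$, the average $\avgp{S}(y) = \frac{1}{s}\sum_{i\in S} p_i(y)$ is minimized by choosing $S$ to index the $s$ smallest values of $p_i(y)$, giving $\min_S \avgp{S}(y) = \frac{1}{s}\sum_{i=1}^s p_{(i)}(y) \eqdef g(y)$. Hence
\[
R_q = \max_{y\in Y}\max_{|S|=s}\frac{q(y)}{\avgp{S}(y)} = \max_{y\in Y}\frac{q(y)}{g(y)},
\]
with the convention that the ratio is $0$ when $q(y)=g(y)=0$ and $+\infty$ when $q(y)>0=g(y)$. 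Note also that $Z=\sum_y g(y)$ and $\jinx(y)=g(y)/Z$ by \Cref{eq:jinx}.

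Next I would establish the lower bound $R_q \ge 1/Z$ for every $q \in \Distr{Y}$. Since $q$ is a probability distribution on $Y$,
\[
1 = \sum_{y\in Y} q(y) = \sum_{y: g(y)>0} \frac{q(y)}{g(y)}\,g(y) \le \Bigl(\max_y \tfrac{q(y)}{g(y)}\Bigr)\sum_{y} g(y) = R_q \cdot Z,
\]
where the first equality uses that if $g(y)=0$ and $q(y)>0$ then $R_q=+\infty$ and the bound is vacuous. Rearranging gives $R_q \ge 1/Z$ whenever $Z>0$.

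For the matching upper bound, plug in $q=\jinx$: for every $y$ with $g(y)>0$, $\jinx(y)/g(y)=1/Z$, while $g(y)=0$ forces $\jinx(y)=0$ and contributes $0$ to the maximum. Thus $R_{\jinx}=1/Z$, establishing the minmax value.

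Finally, for uniqueness when $Z\neq 0$, suppose $q\in\Distr{Y}$ also achieves $R_q=1/Z$. Then $q(y)\le g(y)/Z = \jinx(y)$ for all $y$ with $g(y)>0$, and $q(y)=0=\jinx(y)$ for all $y$ with $g(y)=0$ (else $R_q=\infty$). But both $q$ and $\jinx$ sum to $1$ over $Y$, so a pointwise inequality together with equal total mass forces $q(y)=\jinx(y)$ for every $y$. There is no real obstacle here; the only subtlety to handle carefully is the treatment of points where $g(y)=0$ under the $0/0$ and $+\infty$ conventions of \Cref{def:robust2}, which is why I would state the simplification $R_q=\max_y q(y)/g(y)$ with these conventions explicitly before running the two-line lower bound.
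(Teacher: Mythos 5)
Your proposal is correct and follows essentially the same route as the paper: both reduce $R_q$ to the pointwise ratio against $g(y)=\frac{1}{s}\sum_{i\le s}p_{(i)}(y)$ (the worst-case $S$ being the $s$ smallest probabilities at each $y$) and observe that $\jinx = g/Z$ attains $1/Z$. The only cosmetic difference is that you obtain the lower bound and uniqueness by summing $q(y)\le R_q\,g(y)$ over $y$ and comparing total mass, whereas the paper argues that any $q\ne\jinx$ must exceed $\jinx$ at some point; these are two phrasings of the same normalization argument.
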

\begin{proof}
Fix $\pp$ and hence $Z$. First consider the case that $Z=0$ and let $q \in \Distr{Y}$ be arbitrary. We must show that $R_{q}=\infty$ for some size-$s$ $S$. Let $y \in Y$ be any output with positive probability $q(y)>0$. Since $Z=0$, this means that $\sum_{i \le s}p_{(i)}(y)=0$ so there are $s$ indices $i$ such that $p_i(y)=0$. Let $S$ be this set, then $\avgp{S}(y)=0$ and hence $R_{q}=+\infty$.

Next, consider the case where $Z>0$. We first claim that, $R_{\jinx} \le 1/Z$, equivalently that $\jinx(y)\le \avgp{S}(y)/Z$ for every $|S|=s$. To see this, note that,
  $$\jinx(y) = \frac{1}{Z} \cdot\frac{1}{s}\sum_{i=1}^s p_{(i)}(y) \leq \frac{1}{Z} \cdot \frac{1}{s}\sum_{i \in S} p_i(y) = \frac{\avgp{S}(y)}{Z}.$$
  Conversely, any distribution $q \ne \jinx$ must satisfy, for some $y \in Y$, $q(y) > \jinx(y).$
  Fix such a $y$ and consider the set $S$ which consists of the $s$ distributions with smallest values $p_i(y)$ (breaking ties arbitrarily). For this set $S$, we have,
  $$q(y) > \jinx(y) = \frac{1}{Z} \cdot\frac{1}{s}\sum_{i=1}^s p_{(i)}(y) = \frac{1}{Z} \avgp{S}(y).$$
  Thus, $R_q \ge R_{q,\avgp{S}} > 1/Z$.
\end{proof}

For any distribution $q \in \Distr{Y \cup \{\bot\}}$, it will be convenient in this section to use the term \textit{output rate} for the probability $q(Y)= 1-q(\bot)$ that a distribution $q \in \Distr{Y \cup \{\bot\}}$ produces a non-abstaining output. There is a trivial linear relationship between $q(Y)$ and $R$-robustness which means that there is a single non-abstaining distribution $\jinx$ characterizes the set of Pareto-optimal frontier of distributions in terms of abstaining and robustness.

Any possibly abstaining distribution $q$ can be decomposed into its output rate $q(Y)$ and its \textit{conditional generation distribution} $q'\in \Distr{Y}$ where $q'(y)=q(y)/q(Y)$ for all $y \in Y$. Robustness trivially scales linearly with output rate:
\begin{lemma}[Robustness-abstention tradeoff]\label{lem:tradeoff}
    Let $q\in \Distr{Y \cup \{\bot\}}$ and let $q'\in \Distr{Y}$ be such that $q(y) = q(Y) \cdot q'(y)$ for all $y\in Y$. Then $R_q = q(Y) R_{q'}$.
\end{lemma}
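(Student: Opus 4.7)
The plan is to derive the identity directly from \Cref{def:robust2} by pulling the scalar $q(Y)$ out of the maxima. Substituting the hypothesis $q(y) = q(Y)\,q'(y)$ into the definition of $R_q$ gives
\[
R_q \;=\; \max_{y \in Y}\;\max_{S \subseteq [k]:\,|S|=s}\; \frac{q(Y)\,q'(y)}{\avgp{S}(y)}.
\]
Since $q(Y)$ is a single nonnegative real independent of both $y$ and $S$, I would factor it outside both maxima to obtain $R_q = q(Y) \cdot R_{q'}$, which is precisely the claim.

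The only real work is handling the edge cases built into \Cref{def:robust2}. In the generic case where every $\avgp{S}(y)$ relevant to the outer max is positive and $q(y)>0$, the ratio scaling argument above literally applies. If $q(Y)=0$, then $q(y)=0$ for all $y \in Y$, every ratio $q(y)/\avgp{S}(y)$ evaluates to $0$ under the stated convention, and $R_q = 0 = q(Y)\cdot R_{q'}$ regardless of the value of $R_{q'}$. If $q(Y)>0$ but there exist $y, S$ with $\avgp{S}(y)=0$ and $q(y)>0$, then equivalently $q'(y)>0$, so $R_{q'}=+\infty$ and $R_q=+\infty$; multiplying by the positive scalar $q(Y)$ preserves the infinity, so $R_q = q(Y)\,R_{q'}$ still holds.

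I do not anticipate any substantive obstacle. The lemma is bookkeeping: the key observation is simply that $q(Y)$ is a constant with respect to the double maximum, and the rest is verifying that the $0/0 \coloneqq 0$ and $+\infty$ conventions interact correctly with multiplication by that scalar. This is what makes the robustness-abstention tradeoff linear and lets the section conclude that $\jinx$ (together with its rescalings) traces out the full Pareto frontier.
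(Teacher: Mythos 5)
Your proof is correct and is essentially the paper's own argument: substitute $q(y)=q(Y)\,q'(y)$ into \Cref{def:robust2} and factor the scalar $q(Y)$ out of the double maximum. The paper states this as a one-line computation; your additional care with the $0/0$ and $+\infty$ conventions (and the degenerate case $q(Y)=0$) is sound and only makes the bookkeeping explicit.
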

\begin{proof}
    By \Cref{def:robust2}, 
    $$R_{q} = \max_{y \in Y}\max_{S \subseteq [k]: |S|=s} \frac{q(y)}{\avgp{S}(y)} = \max_{y \in Y}\max_{S \subseteq [k]: |S|=s} \frac{q(Y) \cdot q'(y)}{\avgp{S}(y)}= q(Y) R_{q'}.$$
\end{proof}

By this linearity and \Cref{lem:jinx}, it follows immediately that $\jinx \in \Distr{Y}$ characterizes the optimal tradeoff between abstention rate and robustness: a distribution $q$ is Pareto optimal (optimal abstention rate given robustness, and vice versa) if and only if $q(y) = q(Y) \jinx(y)$ for all $y \in Y$. 
\begin{theorem}[Optimality]\label{thm:pareto}
Fix any $\pp=(p_1,\ldots,p_k)\in\Distr{Y}^k$ and $R\in\{0,1,2,\ldots\}\cup\{\infty\}$. Let $q$ be the (possibly abstaining) output distribution of \Cref{alg:efficient} with parameter $R$. Then $q$ is Pareto optimal: for every other (possibly abstaining) distribution $\nu \ne q$, either $q(\bot)<\nu(\bot)$ or $R_q< R_\nu$ (or $R_\nu=\infty$).
\end{theorem}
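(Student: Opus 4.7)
The plan is to combine the three preceding lemmas: Lemma \ref{lem:exact} pins down the exact structure of the algorithm's output distribution, Lemma \ref{lem:tradeoff} converts abstention into a linear rescaling of $R_q$, and Lemma \ref{lem:jinx} identifies $\jinx$ as the unique minimizer of $R_{q'}$ over non-abstaining distributions. With those in hand, the optimality claim is essentially bookkeeping plus a short case split.

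First I would unpack $q$. Lemma \ref{lem:exact} gives $q(\bot)=(1-Z)^R$ and $q(y)=q(Y)\,\jinx(y)$ for every $y\in Y$, so the conditional distribution of $q$ on $Y$ is exactly $\jinx$. Combining Lemma \ref{lem:tradeoff} with Lemma \ref{lem:jinx} then yields $R_q = q(Y)\,R_{\jinx} = q(Y)/Z$ whenever $Z>0$. Next, I would take an arbitrary $\nu \in \Distr{Y\cup\{\bot\}}$ with $\nu\ne q$ and, when $\nu(Y)>0$, write $\nu(y)=\nu(Y)\,\nu'(y)$ for its conditional distribution $\nu'\in \Distr{Y}$, so that Lemma \ref{lem:tradeoff} gives $R_\nu = \nu(Y)\,R_{\nu'}$.

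The main comparison is a two-level case split. If $\nu(\bot) > q(\bot)$, the first disjunct of the conclusion already holds and nothing more is needed. Otherwise $\nu(Y) \ge q(Y) > 0$, and I would further split on whether $\nu' = \jinx$. In the subcase $\nu'=\jinx$, equality $\nu(Y)=q(Y)$ would force $\nu=q$, so we must have $\nu(Y)>q(Y)$, giving $R_\nu = \nu(Y)/Z > q(Y)/Z = R_q$. In the subcase $\nu'\ne\jinx$, the uniqueness half of Lemma \ref{lem:jinx} gives $R_{\nu'} > 1/Z$, whence $R_\nu = \nu(Y)\,R_{\nu'} \ge q(Y)\,R_{\nu'} > q(Y)/Z = R_q$.

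I would then clean up the degenerate cases. If $Z=0$, Lemma \ref{lem:exact} gives $q(\bot)=1$ and $R_q=0$, while Lemma \ref{lem:jinx} forces $R_{\nu'}=\infty$ for any $\nu'\in\Distr{Y}$; thus any $\nu\ne q$ has $\nu(Y)>0$ and so $R_\nu=\infty > R_q$. The case $R=\infty$ with $Z>0$ reduces directly to Lemma \ref{lem:jinx}: either $\nu(\bot) > 0 = q(\bot)$, or $\nu$ is non-abstaining with $\nu\ne\jinx$ and hence $R_\nu > 1/Z = R_q$. I do not anticipate any real obstacle; the only subtlety is making sure the $Z=0$ and $R\in\{0,\infty\}$ boundary cases are handled explicitly, since $\jinx$ and the ratio $q(Y)/Z$ are not defined there and the general formula cannot be substituted blindly.
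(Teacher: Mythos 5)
Your proposal is correct and follows essentially the same route as the paper's proof: \Cref{lem:exact} identifies $q$ as a rescaled copy of $\jinx$, \Cref{lem:tradeoff} linearizes $R_q$ in the output rate, and the uniqueness half of \Cref{lem:jinx} forces any competitor $\nu\ne q$ to lose on abstention or on robustness. The only loose end is the boundary case $R=0$ with $Z>0$, where $q(Y)=0$, so your assertion ``$\nu(Y)\ge q(Y)>0$'' and the chain $q(Y)\,R_{\nu'}>q(Y)/Z$ degenerate; you flag this case but never dispatch it, though the fix is immediate (then $q(\bot)=1\ge\nu(\bot)$ and $R_q=0$, while any $\nu\ne q$ has $\nu(Y)>0$ and hence $R_\nu\ge \nu(Y)/Z>0$).
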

\begin{proof}
Fix a distribution $\nu$ such that $\nu(\bot)\le q(\bot)$ and $R_\nu \le R_q$. We will show that $\nu=q$. 

Let $\jinx$ and $Z$ be as defined in \Cref{lem:jinx}. If $Z=0$, then Algorithm~\ref{alg:efficient} always abstains, so $q(\bot)=1$ and $R_q=0$, any any $\nu$ with $R_\nu=0$ must always abstain. Thus, we can assume $Z>0$ and $\jinx$ is well-defined as in \Cref{eq:jinx}, and $R_\jinx < \infty$. By the linearity of $R$ (\Cref{lem:tradeoff}) and the fact that $\jinx$ is the unique distribution with $\jinx(Y)=1$ and $R_\jinx \le 1/Z$ (\Cref{lem:jinx}), it is easy to see that the only Pareto optimal distributions are $\mu$ which satisfy $\mu(y)=\mu(Y) \cdot \jinx(y)$ for all $y \in Y$, i.e., the distributions $\mu$ which abstain with some probability $\mu(\bot)$ and with the remaining probability $\mu(Y)=1-\mu(\bot)$ output according to $\jinx.$ (To see this, just consider $\nu^* \in \Distr{Y}$ with $\forall y\in Y\,\nu^*(y)\defeq \nu(y)/\nu(Y)$, and note that if $\nu^*\ne q^*$ then $R_{\nu^*}>1/Z$.) Finally, \Cref{lem:exact} shows exactly that: $q(y)=q(Y)\jinx(y)$ for all $y \in Y$.
\end{proof}

\section{Mutual information leakage bound}\applabel{sec:mutinfo}

In this section, we observe that $R$-robustness also implies a quantitative bound on information leakage: at most $\log(R+1)$ bits can be leaked by an adversarial robust distribution, beyond what is already leaked by the safe distribution. This limits the impact of steganography or any attempt to smuggle information external to the safe distribution. \Cref{sec:prompts} will consider the more general case where the prompt is not fixed.

It will be convenient to define a related notion of safety with respect to a \textit{single safe distribution} $p$. 
\begin{definition}[Relative risk]
    For $R \ge 0$, $q \in \Distr{Y \cup \{\bot\}}$, and $p \in \Distr{Y}$, say $q$ is $R$-risky relative to $p$ if $q(y) \le R \cdot p(y)$ for all $y \in Y$.
\end{definition}
If $q$ is $R$-robust, then it follows directly that, for any size-$s$ $S \subseteq [k]$, $q$ is $R$-risky with respect to $\avgp{S}$ which is defined in \Cref{eq:avgp} by $\avgp{S}(y)\defeq \frac{1}{|S|}\sum_{i \in S}p_i(y).$ 
\begin{observation}
  $q$ is $R$-robust relative to $p_1,\ldots,p_k$ if and only if $q$ is $R$-risky relative to $\avgp{S}$ for every $S \subseteq [k]$ of size $s$.
\end{observation}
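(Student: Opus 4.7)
The plan is to prove both directions by exploiting the simple fact that $\frac{1}{s}\sum_{i=1}^s p_{(i)}(T) = \min_{S\subseteq[k]:|S|=s} \avgp{S}(T)$, where $T$ can be a single output $y$ or a set $U$, and where the minimizing $S$ is allowed to depend on $T$. The forward direction will use $T=\{y\}$ with an arbitrary $S$; the reverse direction will use an arbitrary $U$ with $T = U$ and a specific $S$ chosen to depend on $U$.

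For the forward direction ($R$-robust $\Rightarrow$ $R$-risky relative to $\avgp{S}$ for every size-$s$ $S$), I would fix any $y \in Y$ and any size-$s$ set $S \subseteq [k]$, and apply the definition of $R$-robustness to the singleton $U=\{y\}$, obtaining $q(y) \le R \cdot \frac{1}{s}\sum_{i=1}^s p_{(i)}(y)$. Since the $s$ smallest of the values $\langle p_j(y)\rangle_{j \in [k]}$ sum to at most the sum over any particular $s$-subset, we get $\frac{1}{s}\sum_{i=1}^s p_{(i)}(y) \le \frac{1}{s}\sum_{i \in S} p_i(y) = \avgp{S}(y)$, yielding $q(y) \le R \cdot \avgp{S}(y)$ as required.

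For the reverse direction, I would fix an arbitrary $U \subseteq Y$ and choose $S^\star$ to be the set of $s$ indices $j$ with the smallest values of $p_j(U)$ (breaking ties arbitrarily). By construction $\avgp{S^\star}(U) = \frac{1}{s}\sum_{i=1}^s p_{(i)}(U)$. Using the $R$-riskiness hypothesis applied to this $S^\star$, we sum $q(y) \le R \cdot \avgp{S^\star}(y)$ over $y\in U$ to get $q(U) \le R \cdot \avgp{S^\star}(U) = R \cdot \frac{1}{s}\sum_{i=1}^s p_{(i)}(U)$, which is exactly the $R$-robustness inequality for $U$.

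The only subtle point, and the thing one must not blur, is that the order statistic $p_{(i)}(U)$ is defined on the $k$ numbers $\langle p_j(U)\rangle_{j\in[k]}$ rather than being computed pointwise across $y$: thus the ``right'' $S^\star$ in the reverse direction depends on $U$, and differs in general from the set that minimizes $\avgp{S}(y)$ for any individual $y$. This is why we must require the $R$-riskiness hypothesis for \emph{every} size-$s$ subset $S$, not just one. Beyond this, the proof is an immediate unpacking of definitions and has no hidden obstacles.
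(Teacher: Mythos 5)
Your proof is correct and follows essentially the same route as the paper's: the forward direction specializes $R$-robustness to singletons and uses that the $s$ smallest probabilities sum to at most any particular $s$-subset's sum, and the reverse direction sums the pointwise bound over $U$ and then takes the minimizing size-$s$ set, which is exactly the $S^\star$ achieving $\avgp{S^\star}(U)=\frac{1}{s}\sum_{i\le s}p_{(i)}(U)$. Your closing remark correctly identifies the one point the paper leaves implicit, namely that the optimal $S$ in the reverse direction depends on $U$ and need not minimize $\avgp{S}(y)$ pointwise.
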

\begin{proof}
($\Rightarrow$) Take $U=\{y\}$, so $q(y)\le R\cdot \tfrac1s\sum_{i\le s}p_{(i)}(y)\le R\,\overline p_S(y)$ for any $|S|=s$. 
($\Leftarrow$) Summing $q(y)\le R\,\overline p_S(y)$ over $y\in U$ gives $q(U)\le R\,\overline p_S(U)$ for all $S$; minimizing over $S$ yields $q(U)\le R\cdot \tfrac1s\sum_{i\le s}p_{(i)}(U)$.
\end{proof}

Here we model leakage as follows: the safe distribution $p=\avgp{S}$ is fixed while there is a distribution $q_m$ which varies based on some message $m$ to be encoded, which is drawn from a message distribution~$M$. 
We show that $q$ being $R$-risky implies that the mutual information between the drawn message $M$ and the response sample $Q\sim q_M$ is bounded by $\log R+1$. In particular, at most $\log R+1$ bits beyond what is already present in $p$ can be encoded in this way --- using steganography or otherwise.

Recall that for a distribution $p\in \Distr{Y}$, its (Shannon) entropy is
 \[
 H(p) \defeq -\sum_{y\in Y} p(y)\log p(y).
 \]
 For jointly distributed random variables $Q,M$, recall the mutual information is
 \[
 I(Q;M) \defeq H(Q)+H(M)-H(Q,M) = H(Q)-H(Q\mid M).
 \]

\begin{theorem}[Mutual-information leakage]\label{thm:mutinfo}
Fix $p\in \Distr{Y}$ and $R\ge 0$. Let $M$ be any message random variable.
For each $m$ in the support of $M$, let $q_m\in \Distr{Y\cup\{\bot\}}$ be $R$-risky relative to $p$,
i.e., $q_m(y)\le R\,p(y)$ for all $y\in Y$.
Let $Q\sim q_M$.
Then $I(Q;M)\le \log(R+1)$.
\end{theorem}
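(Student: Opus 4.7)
The plan is to bound $I(Q;M)$ by exhibiting a single \emph{reference} distribution $\tilde p$ on $Y \cup \{\bot\}$ such that every $q_m$ has bounded max-ratio (Rényi-$\infty$ divergence) against $\tilde p$, and then convert this pointwise bound into a mutual information bound by the standard ``golden formula'' identity $\E_m\!\left[D(q_m \,\|\, \tilde p)\right] = I(Q;M) + D(q^* \,\|\, \tilde p)$, where $q^*$ denotes the marginal of $Q$.

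The first step is to extend $p$ to $Y\cup\{\bot\}$ in a way that leaves room to absorb abstention mass. Specifically, I would define
\[
  \tilde p(y) \defeq \frac{R}{R+1}\, p(y) \text{ for } y\in Y, \qquad \tilde p(\bot) \defeq \frac{1}{R+1},
\]
which is a valid probability distribution. Then I would verify the pointwise bound $q_m(y) \le (R{+}1)\,\tilde p(y)$ on all of $Y\cup\{\bot\}$: for $y\in Y$ the hypothesis gives $q_m(y) \le R\,p(y) = (R{+}1)\,\tilde p(y)$, and for $\bot$ one has the trivial $q_m(\bot) \le 1 = (R{+}1)\,\tilde p(\bot)$. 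Consequently $\log\bigl(q_m(y)/\tilde p(y)\bigr) \le \log(R{+}1)$ wherever $q_m(y)>0$, so $D(q_m \,\|\, \tilde p) \le \log(R+1)$ for every $m$ in the support of $M$.

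Finally I would apply the identity above, which gives $I(Q;M) \le \E_m[D(q_m \,\|\, \tilde p)] \le \log(R+1)$. The only genuinely creative step is the construction of $\tilde p$; the rest is bookkeeping with standard information-theoretic identities. The subtlety to watch is that the $R$-risky assumption only controls $q_m$ on $Y$ and allows arbitrary mass at $\bot$, which is exactly why $\tilde p$ must spend $1/(R{+}1)$ of its mass on $\bot$ --- any smaller allocation fails the pointwise domination at $\bot$, and any larger one wastes budget on $Y$ and forces a weaker ratio. Degenerate cases ($R=0$, forcing $q_m \equiv \bot$ and $I(Q;M)=0$; or $p(y)=0$, forcing $q_m(y)=0$) are handled automatically by these conventions.
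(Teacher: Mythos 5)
Your proof is correct and is essentially the paper's own argument: the paper constructs exactly the same reference distribution ($p'(\bot)=\tfrac{1}{R+1}$, $p'(y)=\tfrac{R}{R+1}p(y)$) and derives the bound via entropy and cross-entropy inequalities, which is precisely the ``golden formula'' decomposition $\E_m[D(q_m\|\tilde p)] = I(Q;M) + D(q_{\mathrm{marg}}\|\tilde p) \ge I(Q;M)$ that you invoke. The only difference is presentational (KL-divergence bookkeeping versus raw entropy manipulation), plus a cosmetic symbol clash: the paper already reserves $q^*$ for the normalized consensus distribution, so you would want a different name for the marginal of $Q$.
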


\begin{proof}
Let $Y'\defeq Y\cup\{\bot\}$ and define $p'\in\Distr{Y'}$ by
$p'(\bot)=\frac{1}{R+1}$ and $p'(y)=\frac{R}{R+1}\,p(y)$ for $y\in Y$.
Then for all $y\in Y'$, $q_m(y)\le (R+1)\,p'(y)$ (since $q_m(\bot)\le 1$ and $q_m(y)\le R p(y)$ on $Y$).

Hence, pointwise $\log q_m(y)\le \log(R+1)+\log p'(y)$, so
\[
H(q_m)=-\!\sum_{y\in Y'} q_m(y)\log q_m(y)
 \ge  -\!\sum_{y\in Y'} q_m(y)\log p'(y) - \log(R+1).
\]
Averaging over $M$ gives
\[
H(Q\mid M) \ge  -\sum_{y\in Y'} \Pr[Q=y]\log p'(y) - \log(R+1).
\]
By the cross-entropy inequality (with the same support $Y'$),
\[
H(Q) \le  -\sum_{y\in Y'} \Pr[Q=y]\log p'(y).
\]
Subtracting the inequalities yields $I(Q;M)=H(Q)-H(Q\mid M)\le \log(R+1)$.
\end{proof}

\section{Varying prompts}\applabel{sec:prompts}
In this final section, we consider the inclusion of prompts $x \in X$. A model is now a conditional generation distributions $p \in \Distr{Y \mid X}$. The only change to the algorithm is the inclusion of the prompt $x \in X$ as an input, shown in \Cref{alg:prompts}. 

\begin{algorithm}[tb]
  \caption{Consensus sampling with prompts}
  \label{alg:prompts}
  \begin{algorithmic}[1]
    \Require Prompt $x \in X$, integers $k \ge s \ge 1$, $R \ge 0$, distributions $\pp \in (\Distr{Y\mid X})^k$
    \Ensure A sample $y\in Y$ or $\bot$
    \For{$r = 1$ to $R$}
      \State Pick a random index $i$ and sample $y \sim p_i(\cdot \mid x)$
      \State With probability $\dfrac{\frac{1}{s}\sum_{i\le s} p_{(i)}(y \mid x)}{\frac{1}{k}\sum_{i=1}^k p_i(y \mid x)}$ \textbf{return} $y$  
    \EndFor
    \State \Return $\bot$
  \end{algorithmic}
\end{algorithm}

\Cref{obs:steg} trivially generalizes to using a single decoding function across prompts. (One can also generalize to a decoder $f: X \times Y \rightarrow \{0,1\}^*$.)
\begin{observation}\label{obs:stegprompt}
Fix a decoding function $f: Y \rightarrow \{0,1\}^*$ and $R \ge 1$. Let $q_x, p_x \in \Distr{Y}$ be output and safe distributions over $y$, given prompt $x$. If, for every prompt $x$, $q_x$ is $R$-robust relative to $p_x$, then:
$$\Pr_{Q \sim q_x}[f(Q)=m] \le R \cdot \Pr_{P \sim p_x}[f(P)=m] \text{ for every }x \in X, m \in \{0,1\}^*.$$
\end{observation}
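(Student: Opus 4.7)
The plan is to reduce the claim to a single-prompt statement and then invoke the pointwise inequality that defines the hypothesis. Fix an arbitrary prompt $x \in X$ and message $m \in \{0,1\}^*$, and let $U \defeq f^{-1}(m) \subseteq Y$ be the set of outputs that decode to $m$. Then $\Pr_{Q \sim q_x}[f(Q) = m] = q_x(U)$ and $\Pr_{P \sim p_x}[f(P) = m] = p_x(U)$, so the desired bound becomes $q_x(U) \le R \cdot p_x(U)$. Because the statement is quantified separately over each $x$ and $m$, there is no interaction between prompts: it suffices to establish this one-prompt inequality.

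Next I would invoke the per-prompt hypothesis: for each $x$, $q_x$ is $R$-robust (in the relative-risk sense of \Cref{sec:mutinfo}) relative to $p_x$, which is exactly the pointwise bound $q_x(y) \le R \cdot p_x(y)$ for every $y \in Y$. Summing this inequality over $y \in U$ yields $q_x(U) \le R \cdot p_x(U)$, which is the required conclusion. This mirrors the proof of \Cref{obs:steg}, specialized to a single safe distribution ($p_x$) per prompt rather than an order-statistic average over $k$ candidates.

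The only step that requires even a moment of care is the translation between the phrase ``$R$-robust relative to $p_x$'' and the pointwise risk inequality $q_x(y) \le R\, p_x(y)$; once that identification is made, everything else is a summation over the preimage $f^{-1}(m)$. There is no genuine obstacle here: because both the hypothesis and the conclusion are per-prompt, no union bound, averaging, or coupling across $X$ is needed, and the bound for $q_x(U)$ requires nothing beyond monotonicity of summation.
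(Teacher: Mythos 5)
Your proof is correct and takes essentially the same route as the paper, which simply notes that the claim follows from the definition of $R$-robustness applied to the unsafe set $U=f^{-1}(m)$. Your explicit remark that ``$R$-robust relative to a single distribution $p_x$'' should be read as the pointwise relative-risk bound $q_x(y)\le R\,p_x(y)$ correctly resolves the paper's slightly loose use of the term (the set-based and pointwise versions are equivalent here anyway), and the rest is just summation over $U$.
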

The proof again follows trivially from the definition of $R$-robustness by considering the unsafe sets $U=f^{-1}(m)$. Unfortunately, the direct analog of the mutual information bound \Cref{thm:mutinfo} does not hold in the setting where the safe distribution depends on a prompt $x$. Instead, we use a more refined model of leakage called maximal leakage~\citep{issa2020maxleakage} in place of mutual information. 

We consider a generation process that may involve a prompt $x$ which determines the safe distribution $p_x\in \Distr{Y}$. That is, for each prompt $x$ we assume there is a fixed distribution $p_x$. We are concerned about leakage from a message $m$, which is possibly related to the prompt $x$. To this end, suppose that for each $m,x$ there is a distribution of responses $q_{x,m}\in \Distr{Y}$. We thus wish to know how much information from $m$ leaks into a sample $y \sim q_{x,m}$. Of course, significant information from the prompt may already be naturally encoded into $p_x$, so the question is then how much \textit{more information} may leak from $m$ into $q_{x,m}$ than leaks from $m$ into $p_x$. 

Issa et al.~\citep{issa2020maxleakage} define and justify the following definition of \textit{maximal leakage} as a better measure of leakage than mutual information which, as they discuss, has known flaws.
\begin{definition}[Maximal leakage \citep{issa2020maxleakage}]\label{def:maxleak}
For random variables $M,Y$, the maximal leakage from $M$ to $Y$ is
\[
\mathcal{L}(M \to Y)\ \defeq\ \log \sum_y\max_m \Pr[Y=y\mid M=m].
\]
\end{definition}
If $Y$ and $M$ are independent, then $\mathcal{L}(M \to Y)=0$. If $Y=M$ then $\mathcal{L}(M \to Y)$ is log of the number of possible messages.

\begin{theorem}[Maximal leakage]\label{thm:leakage}
Let $X, M$ be jointly distributed random variables (for the prompt and message, respectively). 
For each of their possible values $(x,m)$, let $p_x\in \Distr{Y}$ be a safe distribution and $q_{x,m}\in \Distr{Y \cup \{\bot\}}$ a sampling distribution which is assumed to be $R$-robust relative to $p_x$. 
Define $Q$ by sampling $Q\sim q_{X,M}$ and $P$ by sampling $P\sim p_X$. Thus, conditioned on $X$, $P$ and $M$ are independent.
Then
\[
\mathcal{L}(M \to Q) - \mathcal{L}(M \to P) \le \log(R+1).
\]
\end{theorem}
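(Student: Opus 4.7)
The plan is to push through the $R$-risky pointwise inequality $q_{x,m}(y)\le R\,p_x(y)$ (for $y\in Y$), together with the trivial bound $q_{x,m}(\bot)\le 1$, under the mixture induced by $X \mid M=m$, and then bound the resulting ``thickened'' leakage quantity against $\mathcal L(M\to P)$. Recall that $R$-robust relative to $p_x$ implies $R$-risky relative to $p_x$ (viewing $p_x$ as the averaged safe distribution in our one-distribution reduction).

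First I would expand the conditional laws by conditioning on the prompt. Because $q_{x,m}$ only depends on $(x,m)$ and, conditioned on $X$, the variable $P$ is independent of $M$, we have for every $m$ and every $y\in Y$
\begin{equation*}
\Pr[Q=y\mid M=m]=\sum_{x}\Pr[X=x\mid M=m]\,q_{x,m}(y),\quad
\Pr[P=y\mid M=m]=\sum_{x}\Pr[X=x\mid M=m]\,p_x(y),
\end{equation*}
and analogously $\Pr[Q=\bot\mid M=m]=\sum_x \Pr[X=x\mid M=m]\,q_{x,m}(\bot)$.

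Next, mimicking the trick in \Cref{thm:mutinfo}, I would define a ``thickened'' safe law $\tilde p_x \in \Distr{Y\cup\{\bot\}}$ by $\tilde p_x(\bot)\defeq \tfrac{1}{R+1}$ and $\tilde p_x(y)\defeq \tfrac{R}{R+1}p_x(y)$ for $y\in Y$. Then the $R$-risky bound together with $q_{x,m}(\bot)\le 1$ gives the uniform pointwise inequality $q_{x,m}(y)\le (R+1)\,\tilde p_x(y)$ on all of $Y\cup\{\bot\}$. Let $\tilde P$ be drawn from $\tilde p_X$ independently of $M$ given $X$. Mixing over $X\mid M=m$ preserves the inequality, so $\Pr[Q=y\mid M=m]\le (R+1)\Pr[\tilde P=y\mid M=m]$ for every $y$ and $m$. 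Taking $\max_m$ and summing over $y$ and then applying $\log$ yields
\begin{equation*}
\mathcal L(M\to Q)\ \le\ \log(R+1)+\mathcal L(M\to \tilde P).
\end{equation*}

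Finally I would show the ``abstention padding'' step: $\mathcal L(M\to \tilde P)\le \mathcal L(M\to P)$. Split the sum defining $\mathcal L(M\to \tilde P)$ into the $\bot$ term and the $Y$ terms:
\begin{equation*}
\sum_{y\in Y\cup\{\bot\}}\max_m \Pr[\tilde P=y\mid M=m]\ =\ \tfrac{1}{R+1}+\tfrac{R}{R+1}\sum_{y\in Y}\max_m \Pr[P=y\mid M=m].
\end{equation*}
This is a convex combination of $1$ and $2^{\mathcal L(M\to P)}$, and the latter is always $\ge 1$ (pick any $m^\star$: $\sum_y \max_m \Pr[P=y\mid M=m]\ge \sum_y \Pr[P=y\mid M=m^\star]=1$). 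Hence the combination is $\le 2^{\mathcal L(M\to P)}$, so $\mathcal L(M\to\tilde P)\le \mathcal L(M\to P)$, and chaining with the previous display gives the claimed bound.

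The main obstacle is conceptual rather than computational: one must accommodate the fact that $Q$ lives on $Y\cup\{\bot\}$ while $P$ lives on $Y$, so a naive pointwise ratio $q_{x,m}/p_x$ would be infinite at $\bot$. The fix is the $\tilde p_x$ padding, which absorbs a factor of $\log(R+1)$ once and then shows that the padded leakage is dominated by the unpadded one. Everything else is bookkeeping on conditional distributions.
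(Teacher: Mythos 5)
Your proposal is correct and follows essentially the same route as the paper: both rest on the pointwise bound $\Pr[Q=y\mid M=m]\le R\cdot\Pr[P=y\mid M=m]$ for $y\in Y$, bounding the $\bot$ contribution by $1$, and absorbing that $1$ using $\sum_{y}\max_m\Pr[P=y\mid M=m]\ge 1$. Your "thickened" distribution $\tilde p_x$ is just a repackaging of the paper's direct computation (which writes $1=\sum_y\Pr[P=y]$ and bounds $\Pr[P=y]\le\max_m\Pr[P=y\mid M=m]$), so the two arguments are the same up to presentation.
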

  \begin{proof}[Proof of \Cref{thm:leakage}]
    For any $y\in Y,m \in \{0,1\}^*$, since $q_{x,m}$ is $R$-robust relative to $p_x$ for all $x$,
    \begin{align*}
    \Pr[Q=y\mid M=m] &= \E[q_{X,m}(y)\mid M=m]\\
    &\le \E[R \cdot p_X(y)\mid M=m]\\
    &= R \cdot \Pr[P=y\mid M=m].
    \end{align*}
    Thus, for any $y \in Y$,
    $$
    \max_m \Pr[Q=y\mid M=m] \le \max_m R \cdot \Pr[P=y\mid M=m].
    $$
    Hence, summing over $y$ and taking logs gives,
    \begin{align*}
      \mathcal{L}(M \to P) &= \log \sum_{y \in Y \cup \{\bot\}} \max_m \Pr[Q=y\mid M=m]\\
      &\le \log \left(1+\sum_{y \in Y} \max_m \Pr[Q=y\mid M=m]\right)\\
      &\le \log \left(1+\sum_y \max_m R \cdot \Pr[P=y\mid M=m]\right) \\
      &= \log \sum_y \left(\Pr[P=y] + R \cdot \max_m \Pr[P=y\mid M=m]\right) \\
      &\le \log \sum_y ( R+1) \cdot \max_m \Pr[P=y\mid M=m] \\
      &=\log(R+1) + \log \sum_y \max_m \Pr[P=y\mid M=m]\\
    &=\log(R+1) + \mathcal{L}(M \to P)
    \end{align*}
    This is exactly the bound $\mathcal{L}(M \to Q) \le \log(R+1) + \mathcal{L}(M \to P)$ needed for the theorem.
    \end{proof}


\end{document}